\documentclass[sigconf]{acmart}
\usepackage{bm}
\usepackage{multirow}
\usepackage{pifont}
\AtBeginDocument{%
  }

\copyrightyear{2026}
\acmYear{2026}
\setcopyright{cc}
\setcctype{by}
\acmConference[KDD '26]{Proceedings of the 32nd ACM SIGKDD Conference on Knowledge Discovery and Data Mining V.2}{August 09--13, 2026}{Jeju Island, Republic of Korea}
\acmBooktitle{Proceedings of the 32nd ACM SIGKDD Conference on Knowledge Discovery and Data Mining V.2 (KDD '26), August 09--13, 2026, Jeju Island, Republic of Korea}
\acmDOI{10.1145/3770855.3818044}
\acmISBN{979-8-4007-2259-2/2026/08}

\begin{document}

\title{LEFT: Learnable Fusion of Tri-view Tokens for Unsupervised Time Series Anomaly Detection}

\author{Dezheng Wang}
\email{wangdezheng@seu.edu.cn}
\orcid{0000-0002-6449-1043}
\affiliation{
  \institution{School of Automation, Southeast University}
  \city{Nanjing}
  \country{China}
}

\author{Tong Chen}
\email{tong.chen@uq.edu.au}
\orcid{0000-0001-7269-146X}
\affiliation{
  \institution{School of Electrical Engineering and Computer Science, The University of Queensland}
  \city{Brisbane}
  \country{Australia}
}

\author{Guansong Pang}
\email{gspang@smu.edu.sg}
\orcid{0000-0002-9877-2716}
\affiliation{
  \institution{School of Computing and Information Systems, Singapore Management University}
  \country{Singapore}
}

\author{Congyan Chen}
\authornote{Corresponding authors.}
\email{chency@seu.edu.cn}
\orcid{0000-0001-5341-124X}

\author{Shihua Li}
\email{lsh@seu.edu.cn}
\orcid{0000-0001-9044-7137}
\affiliation{
  \institution{School of Automation, Southeast University}
  \city{Nanjing}
  \country{China}
}

\author{Hongzhi Yin}
\authornotemark[1]
\email{h.yin1@uq.edu.au}
\orcid{0000-0003-1395-261X}
\affiliation{
  \institution{School of Electrical Engineering and Computer Science, The University of Queensland}
  \city{Brisbane}
  \country{Australia}
}

\renewcommand{\shortauthors}{Dezheng Wang et al.}

\begin{abstract}
  As a fundamental data mining task, unsupervised time series anomaly detection (TSAD) aims to build a model for identifying abnormal timestamps without assuming the availability of annotations. 
  A key challenge in unsupervised TSAD is that many anomalies are too subtle to exhibit detectable deviation in any single view (e.g., time domain), and instead manifest as inconsistencies across multiple views like time, frequency, and a mixture of resolutions. However, most cross-view methods rely on feature or score fusion and do not enforce analysis–synthesis consistency, meaning the frequency branch is not required to reconstruct the time signal through an inverse transform, and vice versa. In this paper, we present \underline{Le}arnable \underline{F}usion of \underline{T}ri-view Tokens (LEFT), a unified unsupervised TSAD framework that models anomalies as inconsistencies across complementary representations. LEFT learns feature tokens from three views of the same input time series: frequency domain tokens that embed periodicity information, time domain tokens that capture local dynamics, and multi-scale tokens that learn abnormal patterns at varying time series granularities. By learning a set of adaptive Nyquist-constrained spectral filters, the original time series is rescaled into multiple resolutions and then encoded, allowing these multi-scale tokens to complement the extracted frequency and time domain information. When generating the fused representation, we introduce a novel objective that reconstructs fine-grained targets from coarser multi-scale structure, and put forward an innovative time-frequency cycle consistency constraint to explicitly regularize cross-view agreement. As cross-view agreement is explicitly regularized during training, LEFT can adopt lightweight tri-view encoders while maintaining effective coordination among the three views. Experiments on real-world benchmarks show that LEFT achieves the best performance among the compared baselines under the reported evaluation metrics, while using over $6\times$ fewer FLOPs and achieving about $8\times$ faster training. Code is available at \href{https://github.com/DezhengWang/Left.git}{https://github.com/DezhengWang/Left.git}
\end{abstract}

\begin{CCSXML}
<ccs2012>
<concept>       <concept_id>10010147.10010178.10010187.10010193</concept_id>
<concept_desc>Computing methodologies~Temporal reasoning</concept_desc>
<concept_significance>500</concept_significance>
   </concept>
</ccs2012>
\end{CCSXML}

\ccsdesc[500]{Computing methodologies~Temporal reasoning}

\keywords{Time Series Anomaly Detection, Unsupervised Learning, Cross-view Consistency, Tri-view Tokenization, Learnable Filterbank}


\maketitle

\section{Introduction}
Unsupervised TSAD is a core component in industrial systems, as the ability to generate early alarms on abnormal timestamps without annotated data can support safe operation and reduce maintenance and downtime cost~\cite{chen_sequence-aware_2020, zhong_logicgate_2026}. In real deployments, anomalies are heterogeneous: some are short transients, some are gradual drifts, and some mainly appear as changes in periodic structure \cite{wang_cast_2025} and spectral energy~\cite{li_crossad_nodate, wu_catch_nodate}. This heterogeneity renders a detector focusing on a single perspective unreliable, as illustrated in Fig.~\ref{fig_motivation}, and motivates the extraction of multi-view information for unsupervised TSAD \cite{sun_unraveling_2024}.
\begin{figure}[t]
\centering
\includegraphics[width=0.8\columnwidth]{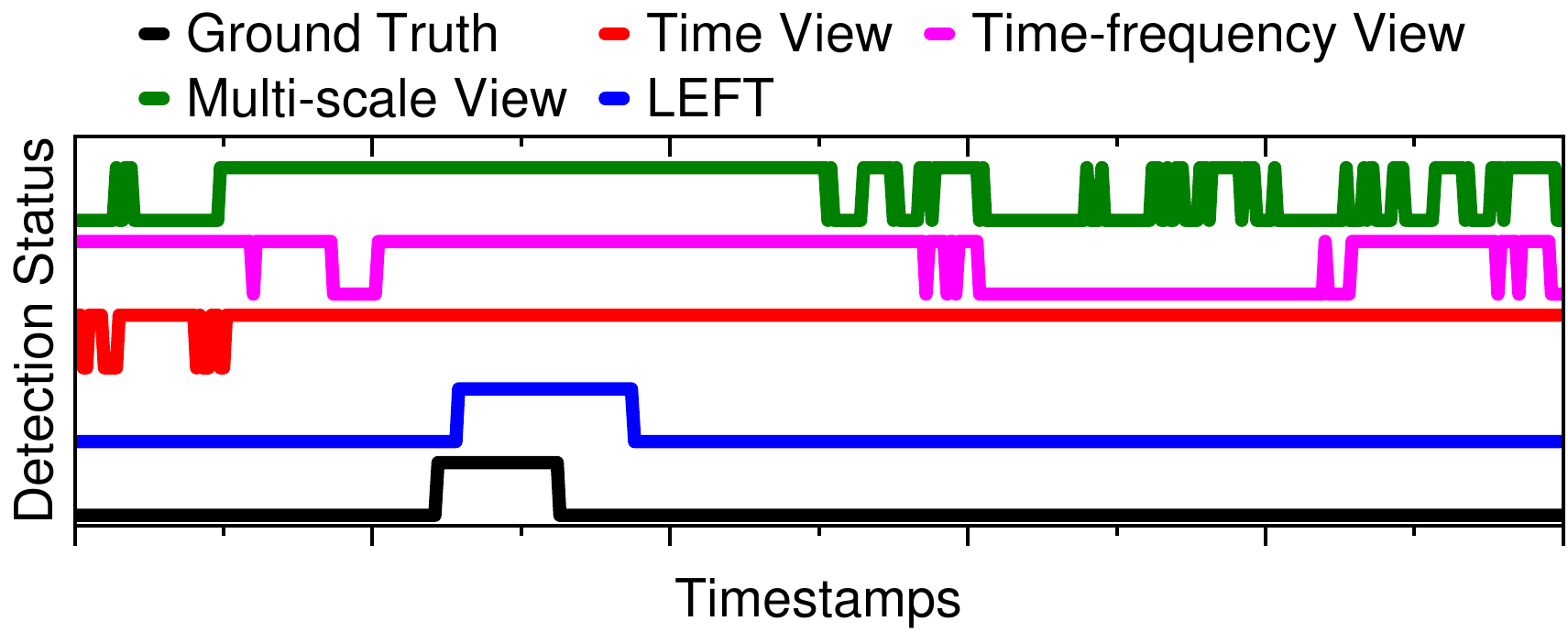}
\caption{An example from SMAP compares predictions from the time-only, time-frequency, multi-scale baseline (CrossAD), and LEFT against the ground truth. High denotes anomalies, and low denotes normal ones.}
\label{fig_motivation}
\end{figure}

From a signal processing perspective, the time and frequency domains are linked by the Fourier transform, which provides a principled connection between these two views~\cite{zhang_improving_2025}. This connection motivates unsupervised TSAD methods to jointly use both views~\cite{nam_breaking_2024}: time domain features capture local dynamics and abrupt deviations~\cite{xia_timeemb_nodate}, while frequency domain features highlight periodic irregularities and band-specific shifts that can be subtle or suppressed by noise in the raw data~\cite{zhong_multi_resolution_2025}. As such, by measuring the disagreement between the representations learned from these two views, an anomaly score can be computed for each timestamp in an unsupervised fashion. 
However, most methods based on time and frequency views still operate on a single sampling resolution, where the model observes only one temporal granularity. Apart from restricting a model's receptive field when learning time series representations, it also prevents the detector from explicitly verifying whether coarse structure (e.g., seasonality) and fine-grained dynamics (e.g., daily change) remain consistent. 
For example, a gradual drift can reshape the coarse trend while slightly detuning short-term rhythms~\cite{zhong_multi_resolution_2025}. In reality, abnormal traces can exhibit different patterns at different sampling granularities, and such anomalies can hardly be detected without inspecting such nuanced coarse-to-fine associations~\cite{li_crossad_nodate, wu_timesnet_nodate}. 
In this regard, introducing multi-scale features at different sampling resolutions as an additional view is a natural enhancement to using only time and frequency features. Intuitively, by jointly modeling time-frequency alignment and cross-resolution consistency, a mutually complementary set of signals can be considered for TSAD. 
However, this brings an additional layer of complexity when learning useful anomaly detection signals from these views with varying semantics, and two significant challenges are to be addressed to maximize the utility of all views for unsupervised TSAD. We outline the challenges below. 

\textit{Challenge 1: How to learn reliable view-specific and scale-specific
representations without supervision?} 
For time and frequency views, most TSAD methods fuse features or scores across both views, or align their feature spaces~\cite{fang_temporal_frequency_2024}. These designs rarely enforce Fourier-based consistency, meaning the predicted spectrum is not required to reconstruct the signal via an inverse transform, and the reconstructed signal is not required to match the spectrum under a forward transform~\cite{wisdom_differentiable_2019}. This leaves the training process prone to learning a biased shortcut that only fits its own view, making cross-domain disagreement a less reliable indicator of true anomalies \cite{fang_temporal_frequency_2024}. For the third view involving multi-scale feature encoding, its performance also hinges on the choice of sampling scales/granularities that are usually tuned via heuristics. However, in different time series data, regimes differ in dominant periods and noise, so the same setting can place the key periodic band at the wrong scale or mix it with noise and miss anomaly-related changes~\cite{wu_catch_nodate}. For example, in some time series with frequent fluctuations, performing straightforward downsampling on time series can fold fast oscillations into low frequencies, so a coarse-scale sequence may show a false flat trend created by aliasing rather than real long-range structure, distorting the learned representations~\cite{nam_breaking_2024}. 

\textit{Challenge 2: How to coordinate cross-view information while preserving useful discrepancies for timestamp-level anomaly scoring?} 
Information fusion across three distinct views is expected to distil complementary information from time, frequency, and multi-scale structure, which enables coherence-based detection on anomalies. However, when generating anomaly scores for each timestamp, existing TSAD methods typically focus on either: (1) the disagreement between features extracted from time and frequency views; or (2) the disagreement across all data resolutions in the multi-scale view. Some works exploit time and frequency views through short-time Fourier transform (STFT) assisted by fusion or reconstruction objectives \cite{fang_temporal_frequency_2024, nam_breaking_2024}, while others emphasize multi-scale features by learning representations at diverse temporal resolutions and aggregating scale-wise outputs \cite{li_crossad_nodate}. While often developed in parallel, few  pipelines jointly model time, frequency, and explicit multi-scale characteristics of the same time series in a unified framework to facilitate effective anomaly scoring on timestamps~\cite{qiu_tab_2025}.

Motivated by these challenges, we propose \textbf{Le}arnable \textbf{F}usion of \textbf{T}ri-view Tokens (\textbf{LEFT}) for unsupervised TSAD, treating anomalies as violations of agreement among time dynamics, time-frequency structure, and multi-scale structure. LEFT builds three complementary views with measurable agreement. It enforces a bidirectional time–frequency cycle, so the time prediction must agree with the input spectrum after analysis and the spectrum prediction reconstructs the input signal after synthesis. This analysis-synthesis coupling goes beyond feature or score fusion, reduces view-specific shortcuts, and makes time–frequency disagreement more reliable anomaly evidence. For multi-scale structure, LEFT uses a Nyquist-constrained learnable filterbank with residual spectral coverage, learning dataset-specific bands without fixed partitions while band-limiting each scale before downsampling to control aliasing and avoid heuristic scale choices. The three token streams interact through selected lightweight fusion rather than dense all-pairs fusion, aligning shared information without washing out view-specific evidence. Training includes a cross-path consistency constraint that aligns the time–frequency cycle with the multi-scale reconstruction, and inference combines multi-scale residuals with cycle-based discrepancy signals to form the final anomaly score.
We summarize our contributions as follows:
\begin{itemize}
    \item We formalize a unified tri-view setting that combines time dynamics, time-frequency structure, and multi-scale structure, where anomalies are exposed as violations of cross-view and cross-resolution consistency.
    \item We propose a Nyquist-constrained learnable filterbank with residual spectral coverage for multi-scale structural tokenization. We introduce bidirectional time-frequency cycle consistency, enforcing analysis-synthesis agreement, and cross-path consistency, aligning the time-frequency pathway with multi-scale reconstruction. We design lightweight token interactions to promote alignment without over-mixing, preserving view-specific evidence for detection.
    \item Extensive experiments show consistent improvements over strong baselines, and demonstrate LEFT improves about 3\% in VUS-ROC and 6\% in VUS-PR, while reducing FLOPs by over 80\% and speeding up training by about $8\times$.
\end{itemize}

\begin{figure*}[t]
\centering
\includegraphics[width=2\columnwidth]{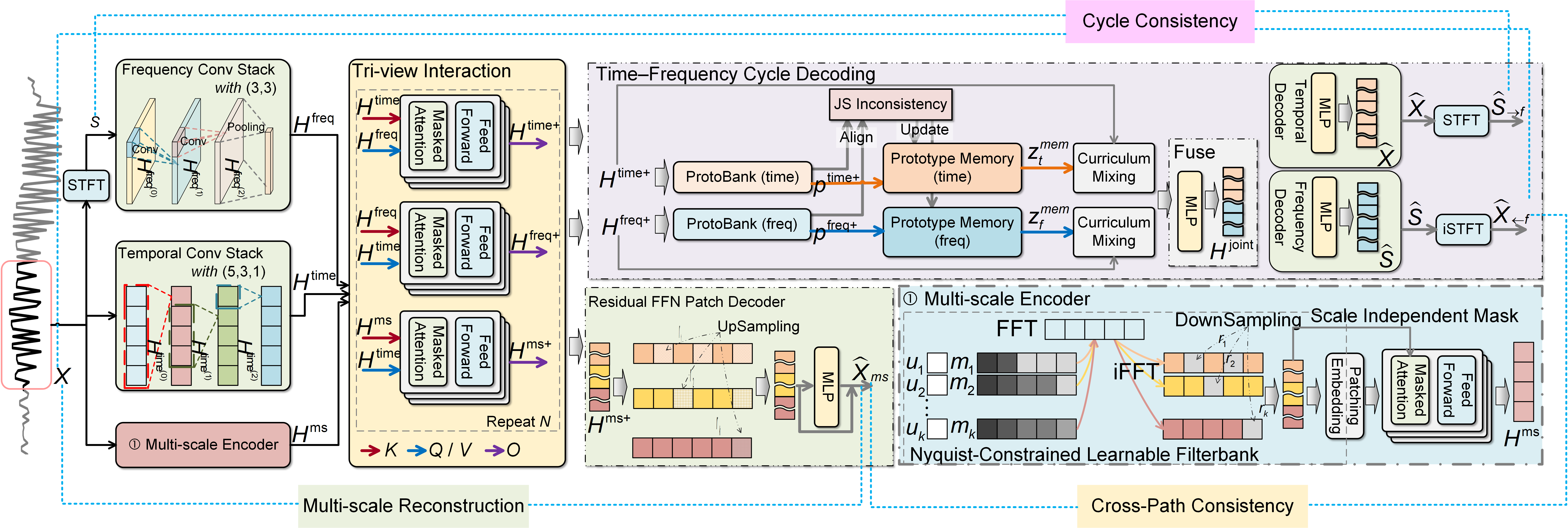}
\caption{The architecture of LEFT.}
\label{fig_overview}
\end{figure*}

\section{Related Work}
\textbf{Unsupervised TSAD}\,\,\,\, Unsupervised TSAD has been studied for decades, from classical pipelines to modern deep models \cite{schmidl_autotsad_2024, li_crossad_nodate}. Classical methods such as LOF \cite{breunig_lof_2000}, DAGMM \cite{zong_deep_2018}, and SVDD \cite{tax_support_2004} are simple and cheap, but they can degrade on industrial data with non-stationary behavior, long-range dependence, and strong variable coupling \cite{behrouz_chimera_2024}. Deep unsupervised TSAD is often grouped into forecasting, reconstruction, and self-supervised or contrastive methods \cite{yang_dcdetector_2023}. Attention-based models are widely used to capture long-range patterns and variable interactions \cite{li_crossad_nodate}. Anomaly Transformer detects anomalies via association inconsistency \cite{xu_anomaly_nodate}, while DCdetector uses discrepancy-aware contrastive learning \cite{yang_dcdetector_2023}.

\textbf{Time–frequency modeling for unsupervised TSAD}\,\,\,\,
Because anomaly evidence can be weak in a single domain, many works incorporate time–frequency information to capture spectral changes such as periodic shifts and band-wise energy redistribution \cite{jin_survey_2024}. Common designs extract Fourier or wavelet features, or adopt frequency-aware architectures \cite{wu_timesnet_nodate}. In many cases, time and frequency are encoded in parallel and coupled through feature alignment or late fusion, which can underuse abnormal time–frequency coupling as a detection signal \cite{nam_breaking_2024}. Empirical studies also suggest that single-scale time domain sampling can miss anomaly signatures in time–frequency analysis \cite{li_crossad_nodate}.

\textbf{Multi-scale modeling for  unsupervised TSAD}\,\,\,\, Multi-scale modeling addresses this issue by representing a series at multiple temporal resolutions. Recent work uses multi-resolution tokenization, patching, or scale mixing \cite{wang_timemixer_nodate}, and TimesNet models multiple period-based views \cite{wu_timesnet_nodate}. In TSAD, multi-scale methods often use per-scale experts with late fusion or shared backbones with implicit scale mixing \cite{shen_time_2021}. Yet anomalies may remain weak at any single resolution and become clearer when coarse structure and fine dynamics stop aligning, which motivates more explicit cross-scale modeling \cite{zhong_multi_resolution_2025, li_crossad_nodate}. At the same time, many multi-scale pipelines focus on time domain representations and do not explicitly integrate frequency domain evidence, even though spectral information can be critical for anomaly detection \cite{wu_catch_nodate, nam_breaking_2024}.

To the best of our knowledge, in unsupervised TSAD, many methods emphasize either time–frequency modeling (e.g.,~\cite{nam_breaking_2024}) or explicit multi-scale structure (e.g.,~\cite{li_crossad_nodate}), while comparatively fewer works integrate both in a unified framework.

\section{Method}
Given a multivariate time-series window $\bm{X}\in\mathbb{R}^{T\times C}$,
the objective is to learn an unsupervised anomaly scoring function that outputs a timestamp-level anomaly map $\mathcal{A}(t)$.

As illustrated in Fig.~\ref{fig_overview}, LEFT constructs three token streams:
$\bm{H}^{\mathrm{time}}$, $\bm{H}^{\mathrm{freq}}$, and $\bm{H}^{\mathrm{ms}}$.
LEFT is based on the premise that normal patterns tend to maintain agreement across domains and resolutions, whereas anomalous patterns are more likely to disturb such agreement. Training is guided by multi-scale reconstruction, cycle consistency, and cross-path consistency, which aligns the cycle pathway with the multi-scale pathway.

\subsection{Tri-view Tokenization}
\label{tokenization}

\subsubsection{Time Tokens}
\label{time_tokens}
Time tokens provide a per-timestamp representation, which preserves local dynamics and abrupt deviations for timestamp-level scoring.
We obtain them with a lightweight 1D convolutional stack $\mathcal{C}^{(\mathrm{1d})}_t$ to provide a stable local inductive bias without resampling:
\begin{equation}
\bm{H}^{\mathrm{time}} = \mathcal{C}^{(\mathrm{1d})}_t(\bm{X}),
\label{eq_time_tokens}
\end{equation}
where $\bm{H}^{\mathrm{time}}\in\mathbb{R}^{T\times D}$ and the kernel sizes of $\mathcal{C}^{(\mathrm{1d})}_t$ uses 1D convolutions with kernel sizes $(5,3,1)$ and corresponding paddings to preserve temporal resolution $T$.

\subsubsection{Frequency Tokens}
\label{freq_tokens}
Frequency tokens capture short-time spectral structure, including changes in periodicity and shifts in band-wise energy that may be hard to see in raw samples. We obtain $\bm{S}$ using a differentiable short-time Fourier transform. This design makes the frequency pathway directly trainable under the cycle consistency objective in Sec.~\ref{cycle}, which encourages spectral consistency with the reconstructed signal. $\bm{S}$ is obtained from $\bm{X}$ as:
\begin{equation}
\bm{S}=W_\theta(\bm{X}),
\label{eq_stft_def}
\end{equation}
where $\bm{S} \in\mathbb{R}^{C\times 2\times F\times T_F}$. $W_\theta(\cdot)$ denotes the differentiable STFT used in the time-frequency branch. In the rest, we use $\mathcal{F}(\cdot)$ and $\mathcal{F}^{-1}(\cdot)$ for the FFT/iFFT used in the multi-scale decomposition. $F$ is the number of frequency bins, and $T_F$ is the number of STFT frames.
Then a frequency encoder $\mathcal{C}^{(\mathrm{2d})}_f$ produces frame-wise tokens:
\begin{equation}
\bm{H}^{\mathrm{freq}} = \mathcal{C}^{(\mathrm{2d})}_f(\bm{S}),
\label{eq_freq_tokens}
\end{equation}
where $\bm{H}^{\mathrm{freq}}\in\mathbb{R}^{T_F\times D}$ and $\mathcal{C}^{(\mathrm{2d})}_f$ treats $(2C)$ as channels, applies two 2D convolution layers with kernel size $(3,3)$, pools over $F$ to obtain frame-wise features, and projects to dimension $D$.

\subsubsection{Multi-scale Structural Tokens via Nyquist-Constrained Learnable Filterbank}
\label{ms_tokens}
To capture multi-scale structure without fixed band partitions, we construct $K$ band-limited downsampled sequences
$\{\bm{X}^{(k)}\}_{k=1}^{K}$ using a Nyquist-constrained learnable filterbank.
Unlike enlarging the receptive field, this construction explicitly produces cross-resolution structure, so coarse-to-fine agreement can be checked during training and inference. Multi-scale tokens are only useful if they remain stable after downsampling.

\begin{lemma}
\label{lem_nyquist_alias}
The learned edges satisfy $0=e_0\le e_1\le\cdots\le e_K$ and $e_k\le c_k$ for all $k$.
Define leakage beyond the cutoff by $\epsilon_k=\sum_{f>c_k}\tilde m_k(f)$.

Let $\tilde{\bm{S}}=\mathcal{F}(\tilde{\bm{X}})$, $\tilde{\bm{S}}^{(k)}(f)=\tilde m_k(f)\tilde{\bm{S}}(f)$, and $\bm{X}^{(k)}=\mathcal{D}_{r_k}\!\left(\mathcal{F}^{-1}\!\left(\tilde{\bm{S}}^{(k)}\right)\right)$,
where $\mathcal{D}_{r_k}(\cdot)$ denotes downsampling by a factor of $r_k$.
Then the aliasing energy after downsampling by $r_k$ satisfies,
\begin{equation}
\begin{split}
\bigl\|\Delta_{r_k}(\bm{X}^{(k)})\bigr\|_2
\ \le\
\vartheta(r_k)\,\|\tilde{\bm{X}}\|_2\,\epsilon_k,\quad
\epsilon_k:=\sum_{f>c_k}\tilde m_k(f),    
\end{split}
\label{eq_alias_bound}
\end{equation}
where $\vartheta(r_k)$ depends only on $r_k$ and FFT conventions.
\end{lemma}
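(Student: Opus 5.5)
The plan is to make the statement concrete and then reduce it to a standard aliasing computation for the DFT. I will take $\Delta_{r_k}(\bm{X}^{(k)})$ to be the difference between two things:
\begin{itemize}
\item the signal actually obtained by downsampling $\mathcal{F}^{-1}(\tilde{\bm{S}}^{(k)})$ by $r_k$, and
\item the signal obtained by downsampling only its in-band part, i.e. $\mathcal{F}^{-1}$ applied to $\tilde{\bm{S}}^{(k)}$ restricted to $|f|\le c_k$.
\end{itemize}
The cutoff $c_k$ is taken at or below the post-downsampling Nyquist frequency $1/(2r_k)$, in normalized units.

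The first ingredient is the DFT aliasing identity. Downsampling by $r_k$ maps the length-$T$ spectrum to length $T/r_k$ by folding:
\begin{equation*}
\mathcal{F}\bigl(\mathcal{D}_{r_k}(y)\bigr)(g)=\frac{1}{r_k}\sum_{j=0}^{r_k-1}\mathcal{F}(y)\Bigl(g+\tfrac{jT}{r_k}\Bigr).
\end{equation*}
Since $c_k$ lies below the new Nyquist frequency, the in-band components ($|f|\le c_k$) occupy distinct folded positions and pass through without overlap. By linearity, $\Delta_{r_k}(\bm{X}^{(k)})$ is then exactly $\mathcal{D}_{r_k}$ applied to the inverse transform of the out-of-band spectrum $\tilde{\bm{S}}^{(k)}\mathbf{1}_{f>c_k}=\tilde m_k\,\tilde{\bm{S}}\,\mathbf{1}_{f>c_k}$. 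The hypothesis $e_k\le c_k$ (together with the monotone edges) guarantees that the passband of the learned mask sits inside this alias-free region, so that only the tail of $\tilde m_k$ can contribute.

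Next, I will bound the norm of that aliased piece.
\begin{enumerate}
\item By Parseval for the short DFT, together with the $1/r_k$ folding factor and Cauchy--Schwarz over the $r_k$ folded terms, $\|\mathcal{D}_{r_k}(z)\|_2\le a(r_k)\|\mathcal{F}(z)\|_2$. Here $a(r_k)$ collects $r_k$ and the FFT normalization constants.
\item Then $\|\tilde m_k\tilde{\bm{S}}\mathbf{1}_{f>c_k}\|_2\le \bigl(\sum_{f>c_k}\tilde m_k(f)^2\bigr)^{1/2}\max_f|\tilde{\bm{S}}(f)|$.
\item Since $0\le\tilde m_k\le 1$ (they are learned soft masks), $\sum_{f>c_k}\tilde m_k^2\le\epsilon_k$. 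To obtain $\epsilon_k$ itself rather than its square root, I will instead use the triangle inequality directly: $\|\sum_{f>c_k}\tilde m_k(f)\tilde{\bm{S}}(f)\phi_f\|_2\le\sum_{f>c_k}\tilde m_k(f)|\tilde{\bm{S}}(f)|\,\|\phi_f\|_2$, where $\phi_f$ is the Fourier atom.
\item Finally, $|\tilde{\bm{S}}(f)|\le b\,\|\tilde{\bm{X}}\|_2$, where $b$ depends only on the FFT convention (e.g. $\sqrt{T}$ unnormalized, $1$ orthonormal).
\end{enumerate}
Combining these gives the claimed bound with $\vartheta(r_k)=a(r_k)\,b\,\max_f\|\phi_f\|_2$. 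I expect this to reduce, under a fixed $T$, to a function of $r_k$ and the conventions only. This is the step where the constant's dependence must be checked; with an unnormalized FFT, a $T$ factor appears, and I will absorb it into ``FFT conventions.''

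The main obstacle is the first step: making rigorous that the in-band part produces no aliasing. This needs the real-signal symmetry $f\leftrightarrow -f$ to be handled, so that the masks are applied to $|f|$, and it needs $c_k\le 1/(2r_k)$ to be strict enough that the folded copies of $[-c_k,c_k]$ are disjoint. I would first try to state this as the standard Nyquist condition for the discrete decimator and treat the boundary bin as out-of-band. If the statement's $c_k$ is defined differently, I would adapt it by replacing $c_k$ with $\min(c_k,1/(2r_k))$.
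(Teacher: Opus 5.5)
\paragraph{Verdict.} Your bound on the aliasing term is, in substance, the paper's argument, and that part is correct. The steps are the same: isolate the part of $\tilde{\bm S}^{(k)}$ above $c_k$, bound its downsampled contribution by a constant times $\sum_{f>c_k}\tilde m_k(f)\|\tilde{\bm S}(f)\|_2$ via the triangle inequality over Fourier atoms, pull out $\max_f\|\tilde{\bm S}(f)\|_2$, and bound that by $\|\tilde{\bm X}\|_2$. The paper only asserts the first of these inequalities, so your explicit definition of $\Delta_{r_k}$ as the decimated out-of-band component is, if anything, more careful.

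\paragraph{The gap: the edge claims are not proved.} You treat the first sentence of the lemma as a hypothesis (``the hypothesis $e_k\le c_k$''), but it is one of the claims to be proved. The paper derives it from $e_k=e_{k-1}+(c_k-e_{k-1})\sigma(u_k)$ by a short induction, which you need to supply:
\begin{enumerate}
\item $r_1\ge\cdots\ge r_K$ gives $c_1\le\cdots\le c_K$, and the base case is $e_0=0\le c_1$.
\item If $e_{k-1}\le c_{k-1}\le c_k$, then $e_k-e_{k-1}=(c_k-e_{k-1})\sigma(u_k)\ge 0$.
\item Since $\sigma(u_k)<1$, also $e_k\le e_{k-1}+(c_k-e_{k-1})=c_k$.
\end{enumerate}

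This matters for your own argument too. The ordering $e_{k-1}\le e_k$, together with monotonicity of $s_\tau$, is exactly what gives $m_k(f)=s_\tau(f-e_{k-1})-s_\tau(f-e_k)\ge 0$. After normalization this yields $0\le\tilde m_k(f)<1$. You justified $0\le\tilde m_k\le 1$ only by calling them soft masks. Yet nonnegativity is what lets the triangle inequality produce $\epsilon_k=\sum_{f>c_k}\tilde m_k(f)$ rather than $\sum_{f>c_k}|\tilde m_k(f)|$. With signed masks, cancellation could make $\epsilon_k$ small while the actual leakage is not.

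\paragraph{Your ``main obstacle'' is not needed.} With your definition, $\Delta_{r_k}(\bm X^{(k)})$ equals the decimated out-of-band signal by linearity alone. Whether the in-band part folds without overlap only affects whether $\Delta_{r_k}$ deserves the name ``aliasing''; it does not affect the inequality.

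\paragraph{The folding identity is not needed either.} That identity presupposes that $r_k$ divides the signal length, which is not guaranteed given $T_k=\lceil T/r_k\rceil$. Strided decimation keeps a subset of samples, so $\|\mathcal D_{r_k}(z)\|_2\le\|z\|_2$ suffices.

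\paragraph{No $T$-dependence has to be absorbed into $\vartheta(r_k)$.} Bound $\|\tilde{\bm S}(f)\|_2$ and the atom norm $\|\phi_f\|_2$ in the same FFT convention. Then the $\sqrt{N}$ factors cancel, where $N$ is the length of $\tilde{\bm X}$, giving $\|\tilde{\bm S}(f)\|_2\,\|\phi_f\|_2\le\|\tilde{\bm X}\|_2$ in every normalization. At most a factor of $2$ from conjugate-symmetric bins remains.
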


Lemma~\ref{lem_nyquist_alias} (proof in Appendix~\ref{sec_proof_lemma1}) shows that a Nyquist-feasible learnable filterbank provides explicit control over aliasing. After downsampling, the aliasing energy is bounded by a term that scales with the out-of-cutoff leakage $\epsilon_k$, so keeping $\epsilon_k$ small makes the downsampled component $\bm{X}^{(k)}$ a stable approximation of a band-limited signal. This property is useful for uncovering normal and abnormal patterns in the multi-scale view. Under normal operation, most energy stays within the learned cutoff, so coarse tokens preserve consistent long-term structure and support reliable coarse-to-fine reconstruction. When abnormal behavior shifts energy across bands or introduces atypical high-frequency content, $\epsilon_k$ can increase and cross-scale agreement becomes harder to satisfy, which makes inconsistencies more detectable through reconstruction. 

Given a boundary-extended signal $\tilde{\bm{X}}$, let $\tilde{\bm{S}}=\mathcal{F}(\tilde{\bm{X}})$. We then apply $K$ learnable soft masks $\{\tilde m_k(f)\}_{k=1}^K$ to $\tilde{\bm{S}}$:
\begin{equation}
\tilde{\bm{S}}^{(k)}(f)=\tilde m_k(f)\tilde{\bm{S}}(f), \quad k=1,\ldots,K.
\label{eq_band_decomp}
\end{equation}
Let downsampling factors $\{r_k\}_{k=1}^{K}$ be ordered coarse to fine ($r_1\ge\cdots\ge r_K$),
and define normalized Nyquist cutoffs $c_k=\tfrac{1}{2r_k}$.
Band edges are parameterized monotonically with learnable parameters $\{u_k\}$:
\begin{equation}
e_k=e_{k-1}+\bigl(c_k-e_{k-1}\bigr)\sigma(u_k),\quad k=1,\ldots,K,
\label{eq_learned_edges}
\end{equation}
where $e_0=0$ and $\sigma(\cdot)$ is the logistic sigmoid. Given $s_\tau(x)=\sigma\big(x/(\tau+\varepsilon)\big)$ with temperature $\tau>0$, unnormalized band masks are constructed by:
\begin{equation}
m_k(f)=s_\tau(f-e_{k-1})-s_\tau(f-e_k),\quad k=1,\ldots,K,
\label{eq_soft_masks_raw}
\end{equation}
and normalized as:
\begin{equation}
\tilde m_k(f)=\frac{m_k(f)}{\sum_{j=1}^{K}m_j(f)+\varepsilon}.
\label{eq_mask_norm}
\end{equation}
The residual spectral coverage is implemented with a residual mask above the last learned edge.
Within the learned structural bands, the normalized band masks form an approximate partition, i.e.,
\begin{equation}
\sum_{k=1}^{K}\tilde m_k(f)\approx 1
\quad \forall f \ \text{s.t.}\quad
\sum_{k=1}^{K} m_k(f)>0.
\label{eq_mask_partition}
\end{equation}
Each component is transformed back by $\mathcal{F}^{-1}(\cdot)$ and downsampled:
\begin{equation}
\bm{X}^{(k)}=\mathcal{D}_{r_k}\!\left(\mathcal{F}^{-1}\!\left(\tilde{\bm{S}}^{(k)}\right)\right), \quad
T_k=\left\lceil \frac{T}{r_k}\right\rceil,
\label{eq_downsample}
\end{equation}
where $\bm{X}^{(k)}\in\mathbb{R}^{T_k\times C}$.
Each scale is patch-tokenized and projected into $D$ dimensions:
\begin{equation}
\bm{H}^{(k)}=\mathcal{P}\bigl(\bm{X}^{(k)}\bigr), 
\label{eq_ms_patch}
\end{equation}
where $\bm{H}^{(k)}\in\mathbb{R}^{L_k\times D}$. $\mathcal{P}(\cdot)$ denotes the patch tokenization and projection operator that maps each downsampled sequence into token representations.
To preserve scale-specific structural patterns within the multi-scale encoder, we use a scale independent attention mask
$\bm{M}_{\mathrm{blk}}\in\mathbb{R}^{L\times L}$ that is block-diagonal over scales and shared across layers.
This prevents mixing across scales inside $\phi^{\mathrm{ms}}$. Cross-view and cross-resolution exchange is handled later by the tri-view interaction module:
\begin{equation}
\bm{H}^{\mathrm{ms}}
=
\phi^{\mathrm{ms}}\!\left(\mathrm{Concat}\bigl(\bm{H}^{(1)},\ldots,\bm{H}^{(K)}\bigr);\bm{M}_{\mathrm{blk}}\right),
\label{eq_ms_encoder}
\end{equation}
where $\phi^{\mathrm{ms}}(\cdot;\bm{M}_{\mathrm{blk}})$ denotes a masked self-attention encoder.

\subsection{Tri-view Interaction}
\label{interaction}

Given $(\bm{H}^{\mathrm{time}},\bm{H}^{\mathrm{freq}},\bm{H}^{\mathrm{ms}})$, LEFT uses lightweight token-level interaction to share evidence while preserving view-specific information. LEFT avoids dense all-pairs fusion because some view pairs are scale-mismatched. $\bm{H}^{\mathrm{freq}}$ is organized by STFT frames, while $\bm{H}^{\mathrm{ms}}$ comes from downsampled sequences. Direct attention between them would force implicit alignment across scales, which can over-mix features and amplify noise.

LEFT therefore uses selected links. Time and frequency views are connected by the Fourier transform, so $\bm{H}^{\mathrm{time}}$ attends to $\bm{H}^{\mathrm{freq}}$ to inject spectral information into time-aligned representations, and $\bm{H}^{\mathrm{freq}}$ attends back to $\bm{H}^{\mathrm{time}}$ to stay tied to the raw signal. Multi-scale tokens are also derived from the time domain, so $\bm{H}^{\mathrm{time}}$ updates $\bm{H}^{\mathrm{ms}}$. This design also allows frequency evidence to influence the multi-scale pathway indirectly through the updated time tokens $\bm{H}^{\mathrm{time}}$. The \textbf{Fusion Strategy Analysis} further shows that all-pairs fusion or denser fusion does not improve performance in our setting.

The interaction module is a stack of $L_f$ shared fusion blocks:
\begin{equation}
(\bm{H}^{\mathrm{time}+},\bm{H}^{\mathrm{freq}+},\bm{H}^{\mathrm{ms}+})
=
\Phi^{(L_f)}\!\left(\bm{H}^{\mathrm{time}},\bm{H}^{\mathrm{freq}},\bm{H}^{\mathrm{ms}}\right),
\label{eq_triview_fusion_compact}
\end{equation}
where each $\Phi^{(L_f)}$ layer updates the three views via directed cross-attention:
$\bm{H}^{\mathrm{time}}\!=\!\phi^{\text{tf}(L_f)}(\bm{H}^{\mathrm{time}},\bm{H}^{\mathrm{freq}})$,
$\bm{H}^{\mathrm{freq}}\!=\!\phi^{\text{ft}(L_f)}(\bm{H}^{\mathrm{freq}},\bm{H}^{\mathrm{time}})$,
and $\bm{H}^{\mathrm{ms}}\!=\!\phi^{\text{mt}(L_f)}(\bm{H}^{\mathrm{ms}},\bm{H}^{\mathrm{time}})$,
followed by a token-wise residual with normalization.
Finally, this selected fusion is closed by the cross-path constraint $\mathcal{L}_{\mathrm{cons}}$ in Sec.~\ref{cycle}, which ties the multi-scale reconstruction $\hat{\bm{X}}_{\mathrm{ms}}$ to the cycle reconstruction $\hat{\bm{X}}_{\leftarrow f}$, so alignment learned through interaction is made verifiable at the reconstruction level rather than remaining a purely embedding-level agreement.

\subsection{Prototype-based Time–Frequency Cycle Calibration and Memory Mixing}
\label{proto}
To provide a shared reference for matching time and frequency representations and reduce noisy or drifting alignment, we introduce two prototype banks $\bm{P}_t,\bm{P}_f\in\mathbb{R}^{M\times D}$ and map fused tokens to soft prototype assignments:
\begin{equation}
\bm{p}_t=\eta\!\left(\gamma\,\mathrm{cos}\!\left(\bm{H}^{\mathrm{time}+},\bm{P}_t\right)\right),\quad
\bm{p}_f=\eta\!\left(\gamma\,\mathrm{cos}\!\left(\bm{H}^{\mathrm{freq}+},\bm{P}_f\right)\right),  
\label{eq_proto_assign}
\end{equation}
where $\eta$ is Softmax. $\mathrm{cos}(\cdot,\cdot)$ denotes cosine similarity and $\gamma$ is a temperature.
We align $\bm{p}_f$ to length $T$ by interpolation and compute a prototype-assignment discrepancy between the time and frequency views:
\begin{equation}
d_{\mathrm{JS}}(t)=\mathrm{JS}\!\left(\bm{p}_t(t),\,\tilde{\bm{p}}_f(t)\right).
\label{eq_js}
\end{equation}
The discrepancy $d_{\mathrm{JS}}(t)$ is used for prototype calibration during training. Specifically, LEFT uses the averaged JS discrepancy, together with the reconstruction error and assignment confidence, to select reliable samples for EMA-based prototype updates. This design avoids directly optimizing JS divergence while still using cross-view assignment disagreement to maintain stable prototype memories. We also derive an uncertainty gate $g(t)$ from assignment sharpness and entropy. Different from $d_{\mathrm{JS}}(t)$, this gate is directly used as auxiliary evidence in the inference score.

The prototypes further act as a memory of stable patterns, read by time-averaged assignments:
\begin{equation}
\bm{z}_t^{\mathrm{mem}}=\Bigl(\mathrm{Mean}_t(\bm{p}_t)\Bigr)\bm{P}_t,
\quad
\bm{z}_f^{\mathrm{mem}}=\Bigl(\mathrm{Mean}_t(\bm{p}_f)\Bigr)\bm{P}_f.
\label{eq_mem_read}
\end{equation}

Let $\bm{z}_t=\mathrm{Mean}(\bm{H}^{\mathrm{time}+})$ and $\bm{z}_f=\mathrm{Mean}(\bm{H}^{\mathrm{freq}+})$.
We use curriculum memory mixing to stabilize decoding, gradually increasing the memory contribution with $\lambda\in[0,1]$:
\begin{equation}
\bm{z}_t^{+}=(1-\lambda)\bm{z}_t+\lambda\bm{z}_t^{\mathrm{mem}},
\quad
\bm{z}_f^{+}=(1-\lambda)\bm{z}_f+\lambda\bm{z}_f^{\mathrm{mem}}.
\label{eq_mem_mix}
\end{equation}

\subsection{Time-Frequency Cycle Decoding}
\label{cycle}
The time-frequency pathway is trained as a bidirectional closed loop through $W_\theta$ and $W_\theta^{-1}$.
The key purpose is to make the frequency prediction physically checkable: a valid $\hat{\bm{S}}$ should synthesize to a consistent signal, and a reconstructed signal should induce a consistent time-frequency representation.
Without this loop, the frequency branch can be under-constrained and drift to spectra that match coarse energy statistics but are not synthesis-consistent, which weakens cross-domain discrepancy as anomaly evidence.

We fuse time/frequency latents into a shared $\bm{z}=\phi\!\left([\bm{z}_t^{+};\bm{z}_f^{+}]\right)\in\mathbb{R}^{D}$. Then decode into both time and frequency outputs, $\hat{\bm{X}} = D_t(\bm{z})$ and $\hat{\bm{S}} = D_f(\bm{z})$.
Define cycle reconstructions:
\begin{equation}
\hat{\bm{X}}_{\leftarrow f}=W_\theta^{-1}(\hat{\bm{S}}),
\quad
\hat{\bm{S}}_{\rightarrow f}=W_\theta(\hat{\bm{X}}).
\label{eq_cycle_decode_aux}
\end{equation}

Enforcing both directions makes the two outputs mutually verifiable and discourages one-branch shortcut fitting: a spectrum is penalized if it cannot synthesize back to a consistent $\hat{\bm{X}}_{\leftarrow f}$, and a waveform is penalized if it induces an inconsistent $\hat{\bm{S}}_{\rightarrow f}$.

\begin{lemma}
\label{lem_tf_cycle}
Suppose there exist constants $0<A\le B<\infty$ such that for any $\bm{Y}$ for which $W_\theta(\bm{Y})$ is well-defined:
\begin{equation}
\sqrt{A}\,\|\bm{Y}\|_2 \le \|W_\theta(\bm{Y})\|_2 \le \sqrt{B}\,\|\bm{Y}\|_2.
\label{eq_frame_bounds}
\end{equation}

Then the reconstruction errors in the time and time-frequency domains satisfy:
\begin{align}
\|\hat{\bm{X}}-\bm{X}\|_2
&\le \frac{1}{\sqrt{A}}\,\|\hat{\bm{S}}_{\rightarrow f}-\bm{S}\|_2,
\label{eq_time_from_freq}\\
\|\hat{\bm{S}}_{\rightarrow f}-\bm{S}\|_2
&\le \sqrt{B}\,\|\hat{\bm{X}}-\bm{X}\|_2,
\label{eq_freq_from_time}
\end{align}
where $\bm{S}=W_\theta(\bm{X})$ and $\hat{\bm{S}}_{\rightarrow f}=W_\theta(\hat{\bm{X}})$.
\end{lemma}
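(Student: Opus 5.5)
The plan is to apply the frame inequality \eqref{eq_frame_bounds} to the single difference signal $\bm{Y}=\hat{\bm{X}}-\bm{X}$. Both stated inequalities should then follow at once, provided $W_\theta$ turns differences of inputs into differences of outputs.

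\textbf{Step 1: linearity.} First I will record that $W_\theta$ is linear in its input. The differentiable STFT is built from framing, multiplication by a (possibly learnable but fixed for a given $\theta$) window, and a DFT per frame. Stacking real and imaginary parts into the channel dimension of size $2$ is also linear over $\mathbb{R}$. Hence
\begin{equation*}
\hat{\bm{S}}_{\rightarrow f}-\bm{S}=W_\theta(\hat{\bm{X}})-W_\theta(\bm{X})=W_\theta(\hat{\bm{X}}-\bm{X}).
\end{equation*}
I also need $W_\theta(\bm{Y})$ to be well-defined for $\bm{Y}=\hat{\bm{X}}-\bm{X}$. This holds because $\hat{\bm{X}}$ and $\bm{X}$ have the same shape $T\times C$, and the same padding and boundary conventions are applied to both.

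\textbf{Step 2: apply the frame bounds.} Substituting $\bm{Y}=\hat{\bm{X}}-\bm{X}$ into the lower bound of \eqref{eq_frame_bounds} gives $\sqrt{A}\,\|\hat{\bm{X}}-\bm{X}\|_2\le\|\hat{\bm{S}}_{\rightarrow f}-\bm{S}\|_2$. Dividing by $\sqrt{A}>0$ yields \eqref{eq_time_from_freq}. The upper bound of \eqref{eq_frame_bounds} gives \eqref{eq_freq_from_time} directly.

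\textbf{Where the difficulty lies.} The only real point to check is the linearity and shape consistency in Step 1. Three things must be confirmed:
\begin{itemize}
\item no nonlinear normalization (e.g. magnitude or log compression) is inside $W_\theta$, which is consistent with $\bm{S}$ keeping a real/imaginary channel of size $2$;
\item the Frobenius norm on the stacked real tensor equals the complex $\ell_2$ norm, so the constants $A,B$ mean the same thing on both sides;
\item any boundary extension is itself linear, such as zero or reflect padding.
\end{itemize}
If that holds, the rest is immediate. As a remark, $A>0$ also implies that $W_\theta$ is injective, so $W_\theta^{-1}$ (e.g. the canonical dual-frame synthesis) is well-defined on the range. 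This justifies the cycle map $\hat{\bm{X}}_{\leftarrow f}=W_\theta^{-1}(\hat{\bm{S}})$, although it is not needed for the two inequalities themselves.
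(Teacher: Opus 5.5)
Your proposal is correct and matches the paper's proof: both write $\hat{\bm{S}}_{\rightarrow f}-\bm{S}=W_\theta(\hat{\bm{X}}-\bm{X})$ and then apply the two sides of the frame inequality to $\bm{Y}=\hat{\bm{X}}-\bm{X}$. Your explicit check that $W_\theta$ is linear (no magnitude or log compression, linear padding) is a useful addition, since the paper justifies this step only ``by definition'' even though linearity is not among the lemma's stated hypotheses.
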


From Lemma \ref{lem_tf_cycle} (proof is given in Appendix \ref{sec_proof_lemma2}), when the time-frequency transform $W_\theta$ satisfies frame bounds, the reconstruction errors in the time domain and the time-frequency domain bound each other up to constants. Therefore, beyond feature or score level fusion, we introduce the following cycle consistency constraint:
\begin{equation}
\mathcal{L}_{\mathrm{cyc}}
=
l\!\left(\hat{\bm{S}}_{\rightarrow f},\,\bm{S}\right)
+
l\!\left(\hat{\bm{X}}_{\leftarrow f},\,\bm{X}\right).
\label{eq_l_cyc}
\end{equation}

We add a cross-path constraint to explicitly align the multi-scale reconstruction pathway with the time-frequency cycle pathway. This alignment is required for tri-view learning, since LEFT aims to enforce agreement among time, frequency, and multi-scale structure within a single model. The alignment target can be either $\hat{\bm{X}}_{\leftarrow f}$ or $\hat{\bm{X}}$. The cycle loss in Eq.~\ref{eq_l_cyc} aligns both reconstructions to the same reference $\bm{X}$, and it also implies that $\hat{\bm{X}}_{\leftarrow f}$ and $\hat{\bm{X}}$ become close because $\|\hat{\bm{X}}-\hat{\bm{X}}_{\leftarrow f}\|_2 \le \|\hat{\bm{X}}-\bm{X}\|_2 + \|\hat{\bm{X}}_{\leftarrow f}-\bm{X}\|_2$. Therefore, it is sufficient to constrain $\hat{\bm{X}}_{\mathrm{ms}}$ to one of them. We choose $\hat{\bm{X}}_{\leftarrow f}$ and enforce their agreement as follows:
\begin{equation}
\mathcal{L}_{\mathrm{cons}}
=
l\!\left(\hat{\bm{X}}_{\mathrm{ms}},\,\hat{\bm{X}}_{\leftarrow f}\right).
\label{eq_l_cons}
\end{equation}
This ties the multi-scale reconstruction pathway to the cycle pathway, so cross-resolution structure learned by $\hat{\bm{X}}_{\mathrm{ms}}$ is required to agree with the synthesis-consistent reconstruction $\hat{\bm{X}}_{\leftarrow f}$.
The tri-view alignment is enforced at the reconstruction level, and cross-path inconsistencies are unlikely to be caused solely by representation drift, making them useful evidence for anomaly scoring.

\subsection{Multi-scale Reconstruction Head}
\label{ms_recon}
The multi-scale pathway aims to reconstruct Nyquist-feasible structural content across resolutions, so that coarse-to-fine agreement can be explicitly checked.
Starting from fused multi-scale tokens $\bm{H}^{\mathrm{ms}+}$, we perform reconstruction in a coarse-to-fine manner: coarser tokens provide structural context while finer targets enforce detailed consistency.
To retain scale-specific structure while injecting cross-view evidence, we combine the original multi-scale tokens and the interaction-enhanced tokens:
\begin{equation}
\tilde{\bm{H}}^{\mathrm{ms}}=\beta_{\mathrm{res}}\bm{H}^{\mathrm{ms}}+\beta_{\mathrm{int}}\bm{H}^{\mathrm{ms}+},
\label{eq_ms_fuse}
\end{equation}
where $\bm{H}^{\mathrm{ms}}$ preserves scale-pure structural evidence produced by the block-diagonal encoder in Eq.~\ref{eq_ms_encoder}, while $\bm{H}^{\mathrm{ms}+}$ carries alignment signals injected through tri-view interaction.

We then reconstruct multi-scale targets by splitting tokens by scale, upsampling in token space, and decoding patches:
\begin{equation}
\hat{\bm{Y}}
=
D_{\mathrm{ms}}\!\left(\mathrm{Up}\bigl(\mathrm{Split}(\tilde{\bm{H}}^{\mathrm{ms}})\bigr)\right),
\label{eq_ms_decode}
\end{equation}
which yields reconstructions for scales $k=2,\ldots,K$ and the full-rate reconstruction
$\hat{\bm{X}}_{\mathrm{ms}}\in\mathbb{R}^{T\times C}$.
We treat the coarsest scale $k=1$ as structural context and do not supervise it, since directly fitting the coarsest component can encourage overly smooth solutions while contributing limited timestamp-level anomaly localization.

Define the multi-scale target:
\begin{equation}
\bm{Y}=\mathrm{Concat}\bigl(\bm{X}^{(2)},\ldots,\bm{X}^{(K)},\bm{X}\bigr).
\label{eq_ms_target}
\end{equation}

We optimize a segment-aware objective so each scale contributes comparably despite different lengths:
\begin{equation}
\begin{split}
\mathcal{L}_{\mathrm{ms}}
&=
\sum_{k=2}^{K}\omega_k\,l\!\left(\hat{\bm{X}}^{(k)}_{\mathrm{ms}},\bm{X}^{(k)}\right)
+\omega_{\mathrm{full}}\,l\!\left(\hat{\bm{X}}_{\mathrm{ms}},\bm{X}\right),
\\
&\text{s.t.} \quad \sum_{k=2}^{K}\omega_k+\omega_{\mathrm{full}}=1,    
\end{split}
\label{eq_l_ms}
\end{equation}
where $l(\cdot,\cdot)$ denotes SmoothL1.

\subsection{Training Objective}
\label{objective}
LEFT is trained to make agreement an inherent property: the multi-scale pathway should reconstruct Nyquist-feasible structure across resolutions, the time-frequency pathway should satisfy bidirectional analysis-synthesis consistency, and the two pathways should agree on the full-rate reconstruction.
Accordingly, the overall training loss combines (i) multi-scale reconstruction, (ii) cycle consistency, and (iii) cross-path consistency on the time domain:
\begin{equation}
\mathcal{L}
=
\lambda_{\mathrm{ms}}\mathcal{L}_{\mathrm{ms}}
+
\lambda_{\mathrm{cyc}}\mathcal{L}_{\mathrm{cyc}}
+
\lambda_{\mathrm{cons}}\mathcal{L}_{\mathrm{cons}},
\label{eq_total_loss}
\end{equation}
where $\mathcal{L}_{\mathrm{ms}}$ teaches scale-specific structure to be reconstructable, $\mathcal{L}_{\mathrm{cyc}}$ restricts the time-frequency outputs to be mutually reconstructable through $W_\theta$ and $W_\theta^{-1}$, and $\mathcal{L}_{\mathrm{cons}}$ locks the two pathways to a shared full-rate prediction, so discrepancies at inference are less likely to arise only from pathway drift and can provide useful evidence for anomaly scoring.

\subsection{Tri-consistency Anomaly Scoring}
\label{inference}
At inference, LEFT scores anomalies as violations of agreement in two complementary forms: inconsistency within the time-frequency cycle, and reconstruction residuals across resolutions.

\begin{lemma}
\label{lem_score_closed_loop}
Assume Lemma~\ref{lem_tf_cycle} holds.
Assume further that the moving-average smoother $\mathrm{MA}_\kappa$ is a linear operator with
nonnegative kernel weights $\{w_i\}$ satisfying $\sum_i w_i=1$.
Moreover, assume there exists a constant $\rho_\kappa\ge 1$ (depending only on $\kappa$ and the boundary rule)
such that for any nonnegative sequence $u(t)$,
\begin{equation}
\Bigl\langle \bigl(\mathrm{MA}_\kappa(u)\bigr)(t)\Bigr\rangle_t
\ \le\
\rho_\kappa\,\langle u(t)\rangle_t.
\label{eq_ma_rho}
\end{equation}

Define the \emph{raw} cycle magnitude inside the smoother as:
\begin{equation}
\tilde{\mathcal{A}}_{\mathrm{cyc}}(t)
=
\alpha_f\left|\hat{\bm{X}}_{\leftarrow f}(t)-\bm{X}(t)\right|
+\alpha_t\left|\hat{\bm{X}}(t)-\bm{X}(t)\right|
+\alpha_g\,g(t)
+\alpha_c\,c(t),
\label{eq_acyc_raw}
\end{equation}
where $c(t)\ge 0$ is the cross-path discrepancy used in Eq.~\eqref{eq_score_cyc}
(e.g., $c(t)=\left|\hat{\bm{X}}_{\mathrm{ms}}(t)-\hat{\bm{X}}_{\leftarrow f}(t)\right|$).
Recall that $\mathcal{A}_{\mathrm{cyc}}(t)=\bigl(\mathrm{MA}_\kappa(\tilde{\mathcal{A}}_{\mathrm{cyc}})\bigr)(t)$ and
$\mathcal{A}(t)=\alpha_{\mathrm{cyc}}\mathcal{A}_{\mathrm{cyc}}(t)+\alpha_{\mathrm{ms}}\mathcal{A}_{\mathrm{ms}}(t)$.
If training achieves
$\mathcal{L}_{\mathrm{ms}}\le \varepsilon_{\mathrm{ms}}$,
$\mathcal{L}_{\mathrm{cyc}}\le \varepsilon_{\mathrm{cyc}}$,
and $\mathcal{L}_{\mathrm{cons}}\le \varepsilon_{\mathrm{cons}}$,
then the mean inference score satisfies,
\begin{equation}
\langle \mathcal{A}(t)\rangle_t
\ \le\
\kappa_{\mathrm{ms}}\varepsilon_{\mathrm{ms}}
+
\rho_\kappa\,\kappa_{\mathrm{cyc}}\varepsilon_{\mathrm{cyc}}
+
\rho_\kappa\,\kappa_{\mathrm{cons}}\varepsilon_{\mathrm{cons}}
+
\rho_\kappa\,\kappa_g\,\alpha_g\,\langle g(t)\rangle_t
+
\kappa_0,
\label{eq_normal_upper}
\end{equation}
where constants depend only on score aggregation weights, supervised scales, and $\rho_\kappa$.
Conversely, let $\Omega\subseteq\{1,\ldots,T\}$ be a non-negligible subset.
If
$\langle |\hat{\bm{X}}(t)-\bm{X}(t)| \rangle_{t\in\Omega} \ge \delta_t$,
$\langle |\hat{\bm{X}}_{\leftarrow f}(t)-\bm{X}(t)| \rangle_{t\in\Omega}\ge \delta_f$,
$\langle c(t)\rangle_{t\in\Omega}\ge \delta_c$,
then the cycle component obeys,
\begin{equation}
\bigl\langle \tilde{\mathcal{A}}_{\mathrm{cyc}}(t)\bigr\rangle_{t\in\Omega}
\ \ge\
\alpha_t\delta_t+\alpha_f\delta_f+\alpha_c\delta_c.
\label{eq_anom_lower_cycle}
\end{equation}

If in addition the pointwise bounds
$|\hat{\bm{X}}(t)-\bm{X}(t)|\ge \delta_t$,
$|\hat{\bm{X}}_{\leftarrow f}(t)-\bm{X}(t)|\ge \delta_f$,
and $c(t)\ge \delta_c$
hold for all $t\in\Omega$, then for any $t\in\Omega$ whose full averaging window of $\mathrm{MA}_\kappa$
lies inside $\Omega$, one has
$\mathcal{A}_{\mathrm{cyc}}(t)\ge \alpha_t\delta_t+\alpha_f\delta_f+\alpha_c\delta_c$.
\end{lemma}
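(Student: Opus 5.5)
The plan is to prove the upper bound by decomposing $\mathcal{A}(t)$ according to its definition and bounding each term separately through the corresponding training loss. I will treat the losses $l$ as averages over timestamps (and channels), and relate their SmoothL1 form to absolute error as follows. SmoothL1 with threshold $\beta$ satisfies $|x|\le \ell_\beta(x)+\beta/2$, and $\beta/2$ is a constant absorbed into $\kappa_0$.

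First, by linearity of $\langle\cdot\rangle_t$,
\[
\langle\mathcal{A}(t)\rangle_t=\alpha_{\mathrm{cyc}}\langle \mathrm{MA}_\kappa(\tilde{\mathcal{A}}_{\mathrm{cyc}})\rangle_t+\alpha_{\mathrm{ms}}\langle\mathcal{A}_{\mathrm{ms}}\rangle_t .
\]
Since $\tilde{\mathcal{A}}_{\mathrm{cyc}}\ge 0$, hypothesis \eqref{eq_ma_rho} gives $\langle\mathrm{MA}_\kappa(\tilde{\mathcal{A}}_{\mathrm{cyc}})\rangle_t\le\rho_\kappa\langle\tilde{\mathcal{A}}_{\mathrm{cyc}}\rangle_t$. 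This is where the factor $\rho_\kappa$ on every cycle-side term comes from. The multi-scale score $\mathcal{A}_{\mathrm{ms}}$ is a weighted combination of scale-wise residuals $|\hat{\bm{X}}^{(k)}_{\mathrm{ms}}-\bm{X}^{(k)}|$ for $k\ge2$ together with the full-rate residual. Each such average is at most $l(\cdot,\cdot)/\omega_k$ plus a constant, so $\langle\mathcal{A}_{\mathrm{ms}}\rangle_t\le \kappa_{\mathrm{ms}}\varepsilon_{\mathrm{ms}}+\mathrm{const}$, with $\kappa_{\mathrm{ms}}$ depending on $\alpha_{\mathrm{ms}}$ and $\min_k\omega_k$ over the supervised scales.

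Next I bound the four terms of $\tilde{\mathcal{A}}_{\mathrm{cyc}}$.
\begin{itemize}
\item The $\alpha_f$ term is controlled directly by the second summand of $\mathcal{L}_{\mathrm{cyc}}$.
\item The $\alpha_t$ term $|\hat{\bm{X}}-\bm{X}|$ is not penalized directly in $\mathcal{L}_{\mathrm{cyc}}$. Here I invoke Lemma~\ref{lem_tf_cycle}, Eq.~\eqref{eq_time_from_freq}: $\|\hat{\bm{X}}-\bm{X}\|_2\le A^{-1/2}\|\hat{\bm{S}}_{\rightarrow f}-\bm{S}\|_2$, and the right-hand side is controlled by the first summand of $\mathcal{L}_{\mathrm{cyc}}$. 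Passing from $\ell_2$ norms to time averages of absolute values uses Cauchy--Schwarz, $\langle|v|\rangle\le T^{-1/2}\|v\|_2$. Passing from SmoothL1 to the $\ell_2$ norm requires either the quadratic regime or an additive constant, and both are absorbed into $\kappa_{\mathrm{cyc}}$ and $\kappa_0$.
\item The $\alpha_c$ term is $\mathcal{L}_{\mathrm{cons}}$ by definition of $c(t)$.
\item The $\alpha_g$ term is simply carried along as $\rho_\kappa\kappa_g\alpha_g\langle g\rangle_t$.
\end{itemize}
Collecting the constants gives \eqref{eq_normal_upper}.

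\textbf{Main obstacle.} The hardest step is the SmoothL1-to-norm conversion for the $\alpha_t$ term. The bound is linear in $\varepsilon_{\mathrm{cyc}}$ only if small losses sit in the quadratic regime, where the $\ell_2$ error scales like $\sqrt{\varepsilon}$. To keep the statement linear, I would instead use the linear-regime inequality $|x|\le \ell_\beta(x)+\beta/2$ throughout, and let $\kappa_0$ absorb all $\beta$-dependent offsets. Because the constants are only claimed to depend on weights, scales and $\rho_\kappa$, this is admissible, though $\sqrt{T}$, $A$ and $\beta$ factors must be declared part of "FFT conventions/aggregation".

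\textbf{Converse.} The converse is elementary. Average $\tilde{\mathcal{A}}_{\mathrm{cyc}}$ over $\Omega$ using linearity, drop the nonnegative $\alpha_g g$ term, and apply the three averaged lower bounds; this gives \eqref{eq_anom_lower_cycle}. For the pointwise version, the hypotheses give $\tilde{\mathcal{A}}_{\mathrm{cyc}}(s)\ge\alpha_t\delta_t+\alpha_f\delta_f+\alpha_c\delta_c$ for every $s$ in the window of $t$. Since $\mathrm{MA}_\kappa$ is a convex combination ($w_i\ge0$, $\sum_i w_i=1$) of these values, it preserves the lower bound, so $\mathcal{A}_{\mathrm{cyc}}(t)\ge\alpha_t\delta_t+\alpha_f\delta_f+\alpha_c\delta_c$.
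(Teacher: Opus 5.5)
Your proposal is correct and follows essentially the same route as the paper's proof. Both convert SmoothL1 losses to absolute residuals up to an additive constant (the paper uses the looser $|r|\le 2\,l(r,0)+1$), handle the $\alpha_t$ term via Lemma~\ref{lem_tf_cycle} plus norm conversion, bound $c(t)$ through $\mathcal{L}_{\mathrm{cons}}$, apply the $\rho_\kappa$ hypothesis to the nonnegative $\tilde{\mathcal{A}}_{\mathrm{cyc}}$, and obtain the converse from nonnegativity and the convex-combination property of $\mathrm{MA}_\kappa$. Your note that the absorbed constants really depend on $A$, $T$ and $\beta$ is accurate (the paper absorbs them silently as fixed dimensional factors), but drop the remark that the quadratic regime yields linearity (it gives $\sqrt{\varepsilon}$), since the global bound $\|r\|_2\le\|r\|_1\le N\,(l+\beta/2)$, with $N$ the number of entries, already suffices.
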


From Lemma \ref{lem_score_closed_loop} (proof is given in Appendix \ref{sec_proof_lemma3}), we propose a timestamp-level anomaly map:
\begin{equation}
\mathcal{A}(t)=\alpha_{\mathrm{cyc}}\mathcal{A}_{\mathrm{cyc}}(t)+\alpha_{\mathrm{ms}}\mathcal{A}_{\mathrm{ms}}(t),
\label{eq_score_total}
\end{equation}
where $\mathcal{A}_{\mathrm{cyc}}(t)$ is the cycle-based discrepancy score and $\mathcal{A}_{\mathrm{ms}}(t)$ is the multi-scale reconstruction score.
We compute cycle residuals and apply a moving-average $\mathrm{MA}_\kappa$ with smoothing window size $\kappa$ to suppress local noise and stabilize timestamp-level decisions:
\begin{equation}
\mathcal{A}_{\mathrm{cyc}}(t)
=
\Big(\mathrm{MA}_\kappa\!\big(
\alpha_f\lvert\hat{\bm{X}}_{\leftarrow f}-\bm{X}\rvert
+\alpha_t\lvert\hat{\bm{X}}-\bm{X}\rvert
+\alpha_g\,g(t)
+\alpha_c\,c(t)
\big)\Big)(t),
\label{eq_score_cyc}
\end{equation}
where $c(t)$ measures cross-path discrepancy between the multi-scale pathway and the cycle pathway on the time domain, so disagreements penalized by $\mathcal{L}_{\mathrm{cons}}$ during training can be directly reflected in the inference score. $g(t)$ is computed from the sharpness and entropy of the time- and frequency-view prototype assignments.
Multi-scale score aggregates errors from all supervised scales by aligning them to the full resolution, so evidence that is salient at a coarse scale can still contribute to the fine-grained anomaly map:
\begin{equation}
\mathcal{A}_{\mathrm{ms}}(t)
=
\lvert\hat{\bm{X}}_{\mathrm{ms}}(t)-\bm{X}(t)\rvert
+
\sum_{k=2}^{K}\operatorname{Up}_{T}\!\left(\lvert\hat{\bm{X}}^{(k)}_{\mathrm{ms}}-\bm{X}^{(k)}\rvert\right)(t),
\label{eq_score_ms}
\end{equation}
where $\operatorname{Up}_{T}(\cdot)$ upsamples a sequence to length $T$.
This design avoids relying on a single-view dominant residual: any-view irregularity can increase $\mathcal{A}(t)$ either through cycle disagreement, cross-path mismatch, or cross-resolution residual accumulation.

\section{Experiments}
\subsection{Experimental Settings}
\textbf{Datasets.} 
\begin{table}[h]
  \centering
    \caption{Dataset statistics. AR denotes anomaly ratio.}
    \resizebox{1\columnwidth}{!}{
    \begin{tabular}{cccccccc}
    \hline
    \textbf{Dataset} & \textbf{Domain} & \textbf{Dimension} & \textbf{Window} & \textbf{Training} & \textbf{Validation} & \textbf{Test (labeled)} & \textbf{AR (\%)} \\
    \hline
    MSL & Spacecraft & 1 & 96 & 46,653 & 11,664 & 73,729 & 10.5 \\
    PSM & Server Machine & 25 & 192 & 105,984 & 26,497 & 87,841 & 27.8 \\
    SMAP & Spacecraft & 1 & 192 & 108,146 & 27,037 & 427,617 & 12.8 \\
    SMD & Server Machine & 38 & 192 & 566,724 & 141,681 & 708,420 & 4.2 \\
    SWaT & Water treatment & 31 & 192 & 396,000 & 99,000 & 449,919 & 12.1 \\
    GECCO & Water treatment & 9 & 128 & 55,408 & 13,852 & 69,261 & 1.25 \\
    SWAN & Space Weather & 38 & 192 & 48,000 & 12,000 & 60,000 & 23.8 \\
    \hline
  \end{tabular}
  }
\label{tab_datasets_details}
\end{table}
We evaluate LEFT on public TSAD benchmarks covering server monitoring, space telemetry, and industrial control. \textit{SMD} and \textit{PSM} contain server resource-utilization or performance metrics~\cite{su_robust_2019,abdulaal_practical_2021}; \textit{MSL} and \textit{SMAP} are NASA telemetry datasets with multivariate sensor and actuator signals~\cite{hundman_detecting_2018}; and \textit{SWaT} records water-treatment sensor traces under normal operation and attacks~\cite{mathur_swat_2016}. We also use \textit{NeurIPS-TS} and report results on its \textit{GECCO} and \textit{SWAN} subsets~\cite{wang_revisiting_2024}. For \textit{MSL} and \textit{SMAP}, we follow prior work~\cite{shentu_towards_nodate,li_crossad_nodate} and keep only the first continuous channel, since discrete variables are less suitable for reconstruction-based scoring. Dataset details are given in Table~\ref{tab_datasets_details}, and the data split and sliding-window setup follow~\cite{li_crossad_nodate}.

\textbf{Baselines.} Baselines include linear transformation-based methods, such as \textit{OCSVM}\cite{scholkopf_support_1999} and \textit{PCA}\cite{mei_ling_novel_2003}, and outlier-oriented detectors, including \textit{IForest}\cite{liu_isolation_2008} and \textit{LODA}\cite{pevny_loda_2016}.
We consider density-based methods (\textit{HBOS}\cite{goldstein_histogram_based_2012}, \textit{LOF}\cite{breunig_lof_2000}), which remain competitive in low-dimensional regimes but can be sensitive to complex temporal drift.
Neural baselines span reconstruction and forecasting, including \textit{AutoEncoder} (AE)\cite{sakurada_anomaly_2014}, \textit{DAGMM}\cite{zong_deep_2018}, \textit{LSTM}\cite{hundman_detecting_2018}, \textit{CAE-Ensemble} (CAE)\cite{campos_unsupervised_2021}, and \textit{Omni-Anomaly} (Omni)\cite{su_robust_2019}.
We include recent TSAD models from transformer, contrastive, frequency-aware, and pre-trained lines, such as \textit{Anomaly Transformer} (AT)\cite{xu_anomaly_nodate}, \textit{DCdetector} (DC)\cite{yang_dcdetector_2023}, \textit{ModernTCN}\cite{luo_moderntcn_2024}, \textit{GPT4TS}\cite{zhou_one_2023}, \textit{MtsCID}\cite{xie_multivariate_2025}, \textit{TimeMixer}\cite{wang_timemixer_nodate}, \textit{TimesNet}\cite{wu_timesnet_nodate}, \textit{CrossAD}\cite{li_crossad_nodate}, \textit{DADA}\cite{shentu_towards_2025}, and \textit{CATCH}\cite{wu_catch_2025}. Among them, \textit{CATCH} is a recent frequency-aware baseline, \textit{DADA} is a recent pre-trained general baseline, and \textit{MtsCID}, \textit{TimeMixer}, \textit{TimesNet}, and \textit{CrossAD} are closely related multi-scale SOTA methods.

\textbf{Metrics.} Evaluation in TSAD can be misleading under point adjustment, since even random predictors may obtain inflated scores~\cite{yang_dcdetector_2023, xu_anomaly_nodate}.
TSB-AD~\cite{liu_elephant_2024} further argues that \textit{VUS-PR}~\cite{paparrizos_volume_2022} provides a more robust and less lag-sensitive evaluation, while several common metrics can be biased across scenarios.
We therefore report \textit{VUS-PR} and \textit{VUS-ROC} as the main metrics
~\cite{paparrizos_volume_2022}.

\textbf{Implementation Details.}
In our setup, the time-view encoder is a lightweight 1D convolutional stack with kernel sizes $(5,3,1)$, while the frequency-view encoder is a lightweight 2D convolutional stack with two $(3,3)$ layers. The multi-scale branch uses a fixed set of downsampling factors $\{r_k\}_{k=1}^K$ and the number of scales $K$ is given by the number of downsampling factors. All decoders are two-layer MLPs. We use sliding windows for both training and inference, and report results under a non-overlapping window protocol. 
Optimization uses Adam with learning rate $10^{-4}$ and batch size 128. Following~\cite{qiu_tfb_2024}, we keep all test windows at inference time and do not apply the drop-last trick. We set decision thresholds with SPOT~\cite{siffer_anomaly_2017}, consistent with prior TSAD practice~\cite{shentu_towards_nodate, li_crossad_nodate}. All experiments are implemented in PyTorch and run on a single NVIDIA L40 GPU. More details and code are available at \href{https://github.com/DezhengWang/Left.git}{https://github.com/DezhengWang/Left.git}.
\subsection{Detection Results}
\begin{table*}[thbp]
\centering
\caption{Results in the seven real-world datasets. V-R and V-P denote VUS-ROC and VUS-PR, where higher values indicate better performance. The best ones are in bold, and the second ones are underlined. $\ddag$ indicates statistical significance by t-test with $p<0.05$. $\dag$ indicates statistical significance by t-test with $p<0.1$. }
\resizebox{\linewidth}{!}{
\begin{tabular}{c|cc|cc|cc|cc|cc|cc|cc}
\hline
Dataset & \multicolumn{2}{c|}{SMD} & \multicolumn{2}{c|}{MSL} & \multicolumn{2}{c|}{SMAP} & \multicolumn{2}{c|}{SWaT} & \multicolumn{2}{c|}{PSM} & \multicolumn{2}{c|}{GECCO} & \multicolumn{2}{c}{SWAN} \\
Metric & V-R & V-P & V-R & V-P & V-R & V-P & V-R & V-P & V-R & V-P & V-R & V-P & V-R & V-P \\
\hline
OCSVM & 
0.6451 & 0.1131 & 
0.5798 & 0.1753 & 
0.4185 & 0.1133 & 
0.5903 & 0.4396 & 
0.5993 & 0.4252 & 
0.7533 & 0.1207 & 
0.9088 & 0.9004 \\
PCA & 
0.7174 & 0.1529 & 
0.6108 & 0.1889 & 
0.4090 & 0.1144 & 
0.6149 & 0.4459 & 
0.6331 & 0.4706 & 
0.5366 & 0.0443 & 
0.9290 & 0.9123 \\
IForest & 
0.7224 & 0.1304 & 
0.5638 & 0.1631 & 
0.4960 & 0.1315 & 
0.3677 & 0.1011 & 
0.6009 & 0.3964 & 
0.7083 & 0.0943 & 
0.8835 & 0.8793 \\
LODA & 
0.6745 & 0.1213 & 
0.5375 & 0.1689 & 
0.3973 & 0.1017 & 
0.6358 & 0.3531 & 
0.6089 & 0.4423 & 
0.5749 & 0.0339 & 
0.9170 & 0.9107 \\
HBOS & 
0.6670 & 0.1102 & 
0.6265 & 0.1790 & 
0.5620 & 0.1388 & 
0.7084 & 0.4602 & 
0.7056 & 0.5061 & 
0.5440 & 0.0453 & 
0.9056 & 0.8894 \\
LOF & 
0.6893 & 0.1076 & 
0.6081 & 0.1715 & 
0.5673 & 0.1409 & 
0.6667 & 0.4187 & 
0.6628 & 0.4615 & 
0.7817 & 0.0919 & 
0.9095 & 0.9007 \\
\hline
AE & 
0.7560 & 0.1542 & 
0.6047 & 0.1890 & 
0.4687 & 0.1366 & 
0.5903 & 0.4144 & 
0.6339 & 0.4490 & 
0.6124 & 0.0448 & 
0.6982 & 0.0201 \\
DAGMM & 
0.6988 & 0.1496 & 
0.6069 & 0.1803 & 
0.5599 & 0.1349 & 
0.5746 & 0.4731 & 
0.5598 & 0.4522 & 
0.5099 & 0.0396 & 
0.8951 & 0.8697 \\
LSTM & 
0.7001 & 0.1395 & 
0.6163 & 0.1681 & 
0.5329 & 0.1399 & 
0.5482 & 0.2200 & 
0.5571 & 0.4592 & 
0.6450 & 0.0668 & 
0.9082 & 0.8862 \\
CAE & 
0.7174 & 0.1376 & 
0.5382 & 0.1639 & 
0.4212 & 0.1140 & 
0.5939 & 0.4104 & 
0.6113 & 0.4395 & 
0.5524 & 0.0528 & 
0.9042 & 0.9022 \\
Omni & 
0.7080 & 0.1340 & 
0.5490 & 0.1973 & 
0.4743 & 0.1239 & 
0.6187 & 0.4475 & 
0.6340 & 0.4472 & 
0.5386 & 0.0517 & 
0.9041 & 0.9022 \\
AT & 
0.5117 & 0.0796 & 
0.3890 & 0.1041 & 
0.4571 & 0.1239 & 
0.5561 & 0.2679 & 
0.5186 & 0.3309 & 
0.4751 & 0.0278 & 
0.8046 & 0.7943 \\
DC & 
0.5145 & 0.0814 & 
0.3900 & 0.0948 & 
0.4444 & 0.1149 & 
0.5191 & 0.1495 & 
0.5235 & 0.3366 & 
0.5454 & 0.0361 & 
0.8429 & 0.8338 \\
GPT4TS & 
0.7679 & 0.1745 & 
0.7697 & 0.2769 & 
0.5449 & 0.1289 & 
0.2537 & 0.0846 & 
0.6466 & 0.4599 & 
0.9776 & 0.4181 & 
0.9340 & 0.8924 \\
ModernTCN & 
0.7707 & 0.1596 & 
0.7747 & 0.3010 & 
0.5470 & 0.1395 & 
0.2735 & 0.0941 & 
0.6480 & 0.4668 & 
0.9694 & 0.4819 & 
0.9027 & 0.8962 \\
MtsCID & 
0.5162 & 0.0815 & 
0.4686 & 0.1181 & 
0.4260 & 0.1177 & 
0.5021 & 0.1283 & 
0.5194 & 0.3296 & 
0.5315 & 0.0381 & 
0.8128 & 0.8375 \\
TimeMixer & 
0.7711 & 0.1391 & 
0.7858 & 0.2461 & 
0.5552 & 0.1371 & 
0.2673 & 0.0918 & 
0.5974 & 0.3807 & 
0.9899 & 0.4606 & 
0.9290 & 0.8721 \\
TimesNet & 
0.8420 & 0.2040 & 
0.7880 & 0.2731 & 
0.5495 & 0.1352 & 
0.2974 & 0.1158 & 
0.6344 & 0.4373 & 
0.9834 & 0.4578 & 
0.9515 & 0.9160 \\
DADA & 
0.7249 & 0.1188 & 
0.5457 & 0.1758 & 
0.4307 & 0.1151 & 
0.6216 & 0.4349 & 
0.6583 & 0.4707 & 
0.6835 & 0.0605 & 
\underline{0.9529} & 0.9081\\
CATCH & 
0.8397 & 0.1951 & 
0.8042 & 0.2971 & 
0.5648 & \underline{0.1479} & 
0.2599 & 0.0982 & 
0.6886 & 0.4888 & 
0.9890 & 0.4512 & 
0.9467 & \underline{0.9189}\\
CrossAD & 
\underline{0.8580} & \underline{0.2344} & 
\underline{0.8091} & \underline{0.3144} & 
\underline{0.5779} & 0.1443 & 
\underline{0.7865} & \underline{0.4767} & 
\underline{0.7302} & \underline{0.5596} & 
\underline{0.9948} & \underline{0.6211} & 
0.9499 & 0.9171 \\
\hline
\textbf{LEFT} & 
\textbf{0.8638} & \textbf{0.2389} & 
\textbf{0.8157}$^{\ddag}$ & \textbf{0.3342}$^{\ddag}$ & 
\textbf{0.6562}$^{\ddag}$ & \textbf{0.1726}$^{\ddag}$ & 
\textbf{0.7991}$^{\ddag}$ & \textbf{0.5290}$^{\ddag}$ & 
\textbf{0.7836}$^{\ddag}$ & \textbf{0.5701}$^{\dag}$ & 
\textbf{0.9949}$^{\ddag}$ & \textbf{0.6634}$^{\ddag}$ & 
\textbf{0.9549}$^{\ddag}$ & \textbf{0.9244}$^{\ddag}$ \\
\hline
\end{tabular}
}
\label{tab_main_results}
\end{table*}

We evaluate LEFT on seven real-world TSAD benchmarks and compare it with 21 competitive baselines, as reported in Table~\ref{tab_main_results}. Across datasets with varied dynamics and anomaly types, LEFT achieves stronger ranking-based detection quality under both VUS-PR and VUS-ROC. As shown in Table~\ref{tab_main_results}, LEFT improves both metrics on every benchmark over the strongest baseline (underlined), with average gains of about 0.023 in both VUS-ROC and VUS-PR. Relative to the strongest baseline on each dataset, LEFT improves VUS-ROC by about 3.45\% and VUS-PR by about 6.45\% on average. The largest relative gains appear on SMAP, where LEFT improves VUS-ROC by 13.55\% and VUS-PR by 16.70\% relative to the strongest baseline. 

For a comprehensive comparison, we report additional evaluation metrics in Table~\ref{tab_main_results_details}. We include AUC-ROC (AUC-R), AUC-PR (AUC-P), Range-AUC-ROC (R-A-R), Range-AUC-PR (R-A-P), Accuracy (Acc), and Standard F1 (F1). Here we compare only with CrossAD~\cite{li_crossad_nodate}, since it is the best-performing baseline in our main results (see Table~\ref{tab_main_results}). The results show that LEFT performs strongly across these metrics, which further supports its effectiveness for time series anomaly detection.
\begin{table}[thbp]
\centering
\caption{Multi-metrics results in the three real-world datasets.}
\resizebox{.9\linewidth}{!}{
\begin{tabular}{cccccccc}
\hline
Dataset & Model & AUC-R & AUC-P & R-A-R & R-A-P & ACC & F1\\\hline
\multirow{2}{*}{PSM} & CrossAD & 0.6523 & 0.4706 & 0.7292 & 0.5733 & 0.6274 & 0.4675\\
& LEFT & \textbf{0.7755} & \textbf{0.5753} & \textbf{0.7858} & \textbf{0.5890} & \textbf{0.6555} & \textbf{0.5693}\\
\multirow{2}{*}{MSL} & CrossAD & 0.7808 & 0.2837 & 0.8095 & 0.3409 & 0.6786 & 0.3361 \\
& LEFT & \textbf{0.7878} & \textbf{0.3019} & \textbf{0.8121} & \textbf{0.3546} & \textbf{0.7296} & \textbf{0.3436} \\
\multirow{2}{*}{SMAP} & CrossAD & 0.5577 & 0.1333 & 0.5926 & 0.1527 & 0.4285 & 0.2792 \\
& LEFT & \textbf{0.6411} & \textbf{0.1618} & \textbf{0.6667} & \textbf{0.1821} & \textbf{0.5842} & \textbf{0.3053} \\
\hline
\end{tabular}
}
\label{tab_main_results_details}
\end{table}

\subsection{Ablation Studies}
\label{sec_ablation}
\begin{table}[thbp]
\centering
\caption{Ablation studies for LEFT. Higher values indicate better performance. `$\downarrow$' marks a severe (over 5\%) performance drop compared with LEFT.}
\resizebox{\linewidth}{!}{
\begin{tabular}{c|cc|cc|cc}
\hline
\multirow{2}{*}{Architecture} & \multicolumn{2}{c|}{PSM} & \multicolumn{2}{c|}{MSL} & \multicolumn{2}{c}{SMAP} \\
& V-R & V-P & V-R & V-P & V-R & V-P 
\\\hline
$w/o$ Tri-view Interaction & 0.7814 & 0.5653 & 0.8142 & 0.3235 & 0.5814$\downarrow$ & 0.1477$\downarrow$ \\
$w/o$ Learnable Filterbank & 0.7748 & 0.5564 & 0.8082 & 0.3254 & 0.5952$\downarrow$ & 0.1549$\downarrow$ \\
$w/o$ Cycle Consistency & 0.6497$\downarrow$ & 0.4407$\downarrow$ & 0.5283$\downarrow$ & 0.1666$\downarrow$ & 0.4487$\downarrow$ & 0.1161$\downarrow$ \\
$w/o$ Cross-path Consistency & 0.7753 & 0.5516 & 0.8100 & 0.3251 & 0.5946$\downarrow$ & 0.1480$\downarrow$\\
LEFT & \textbf{0.7836} & \textbf{0.5701} & \textbf{0.8157} & \textbf{0.3342} & \textbf{0.6562} & \textbf{0.1726}\\
\hline
\end{tabular}
}
\label{tab_ablation_results}
\end{table}

As shown in Table \ref{tab_ablation_results}, to evaluate the components of LEFT, we conduct detailed ablations. The results show that LEFT achieves the best performance, which indicates that the gains come from the combined effect of the proposed modules rather than any single component. Removing tri-view interaction reduces the average from 0.7518 / 0.3590 to 0.7256 / 0.3455, showing that cross-view alignment contributes to detection quality. Replacing the learnable filterbank with a fixed one causes a similar drop to 0.7261 / 0.3456 on average, suggesting that adaptive multi-scale structure captures evidence that fixed band splits miss in this setting. The largest effect comes from cycle consistency, since removing it collapses average performance to 0.5422 / 0.2411, which supports the view that many anomalies are better exposed by agreement breaks between the time and frequency paths than by a dominant single-branch residual. Removing cross-path consistency lowers the average to 0.7266 / 0.3416, which is consistent with its role in aligning the multi-scale pathway with the time–frequency pathway. In Appendix \ref{sec_ablation_appendix}, we provide a more comprehensive ablation analysis.

\subsection{Model Efficiency} 
\label{sec_efficiency}

We compare the efficiency of LEFT with representative TSAD methods on PSM. The baselines cover several common design lines in recent work. AnomalyTransformer~\cite{xu_anomaly_nodate} uses a transformer and detects anomalies through association discrepancy. CrossAD~\cite{li_crossad_nodate} uses a transformer for multi-scale modeling. TimeMixer~\cite{wang_timemixer_nodate} uses MLP-based multi-scale mixing. TimesNet~\cite{wu_timesnet_nodate} uses frequency-aware convolutional modeling. 
Table~\ref{tab_efficiency_results} shows that LEFT achieves the lowest FLOPs and the fastest measured runtime among the compared methods, while maintaining a moderate model size. LEFT uses 188.849M FLOPs and 4.256M parameters. LEFT runs in 25.66 seconds per training epoch and 0.01527 seconds per 128 samples at inference. LEFT also shows clear efficiency advantages over strong baselines.
The results suggest that LEFT stays small enough for typical deployment budgets, while its compute profile leads to clear wall-clock gains during both training and inference.

\subsection{Model Analysis}

\subsubsection{Fusion Strategy Analysis} 
\label{sec_fusion_strategy}

We compare all-pairs fusion and two-view variants. All-pairs fusion, denoted as $M\leftrightarrow F\leftrightarrow T$, updates each view by aggregating information from the other two views, followed by residual addition and layer normalization. The two-view variants $M\leftrightarrow F$, $M\leftrightarrow T$, and $T\leftrightarrow F$ enable interaction only within the selected pair of views, while keeping the third view unchanged. $T\leftrightarrow F$ exchanges information only between time and frequency tokens, $M\leftrightarrow T$ exchanges information only between multi-scale and time tokens, and $M\leftrightarrow F$ exchanges information only between multi-scale and frequency tokens. The impact of fusion strategies on detection performance is reported in Table~\ref{tab_fusion_results}. Across PSM, MSL, and SMAP, the proposed tri-view fusion yields competitive overall performance. It consistently improves over the best two-view variant in both VUS-ROC and VUS-PR, showing that using evidence from all three views jointly is more effective than relying on any pair of views. All-pairs fusion is not always better. It adds many cross-view links, which can mix the views too strongly and spread noise across branches. This can weaken discrepancy signals and hurt performance on some datasets. In contrast, the proposed fusion uses a selected set of interactions, which is enough to align the views, but it keeps view-specific information separated so discrepancies stay meaningful for anomaly scoring. In Appendix~\ref{sec_fusion_results_appendix}, we provide a more comprehensive analysis.

\begin{table}[t]
\centering
\caption{Comparison of various methods w.r.t FLOPs, \# params, training time, inference time. \# params represents the total number of trainable parameters. Training time is measured per epoch with batch size 128, while inference time is measured per batch of 128 samples.}
\resizebox{.95\linewidth}{!}{
\begin{tabular}{crrrr}
\hline
\multirow{2}{*}{Method} & \multirow{2}{*}{FLOPs} & \multirow{2}{*}{\# Params} & \multicolumn{1}{c}{Training} & \multicolumn{1}{c}{Inference}\\
& & & \multicolumn{1}{c}{Time (s)} & \multicolumn{1}{c}{Time (s)}\\
\hline
TimesNet & 21.145 G & 36.734 M & 285.80 & 0.10372\\
TimeMixer & 11.967 G & 1.349 M & 630.19 & 0.28682 \\
AnomalyTransformer & 920.912 M & 4.799 M & 59.86 & 0.03076\\
CrossAD & 1.288 G & 927.750 K & 205.22 & 0.06712 \\
LEFT & 188.849 M & 4.256 M & 25.66 & 0.01527 \\
\hline
\end{tabular}
}
\label{tab_efficiency_results}
\end{table}

\begin{table}[t]
\centering
\caption{Fusion strategy ablation.}
\resizebox{.95\linewidth}{!}{
\begin{tabular}{ccccccc}
\hline
\multirow{2}{*}{Fusion Method} & \multicolumn{2}{c}{PSM} & \multicolumn{2}{c}{MSL} & \multicolumn{2}{c}{SMAP} \\
& V-R & V-P & V-R & V-P & V-R & V-P\\
\hline
$M\leftrightarrow F\leftrightarrow T$ & \textbf{0.7860} & 0.5692 & 0.8106 & 0.3246 & 0.4776$\downarrow$ & 0.1295$\downarrow$ \\
$M\leftrightarrow F$ & 0.7687 & 0.5479 & 0.8069 & 0.3255 & 0.5640$\downarrow$ & 0.1419$\downarrow$ \\
$M\leftrightarrow T$ & 0.7558 & 0.5371$\downarrow$ & 0.8051 & 0.3149$\downarrow$ & 0.4330$\downarrow$ & 0.1163$\downarrow$ \\
$T\leftrightarrow F$ & 0.7790 & 0.5593 & 0.8121 & 0.3265 & 0.6203$\downarrow$ & 0.1634$\downarrow$ \\
Default & 0.7836 & \textbf{0.5701} & \textbf{0.8157} & \textbf{0.3342} & \textbf{0.6562} & \textbf{0.1726} \\
\hline
\end{tabular}
}
\label{tab_fusion_results}
\end{table}

\subsubsection{Fusion Hyper-parameters Analysis} 

\begin{figure}[thbp]
\centering
\includegraphics[width=0.31\columnwidth]{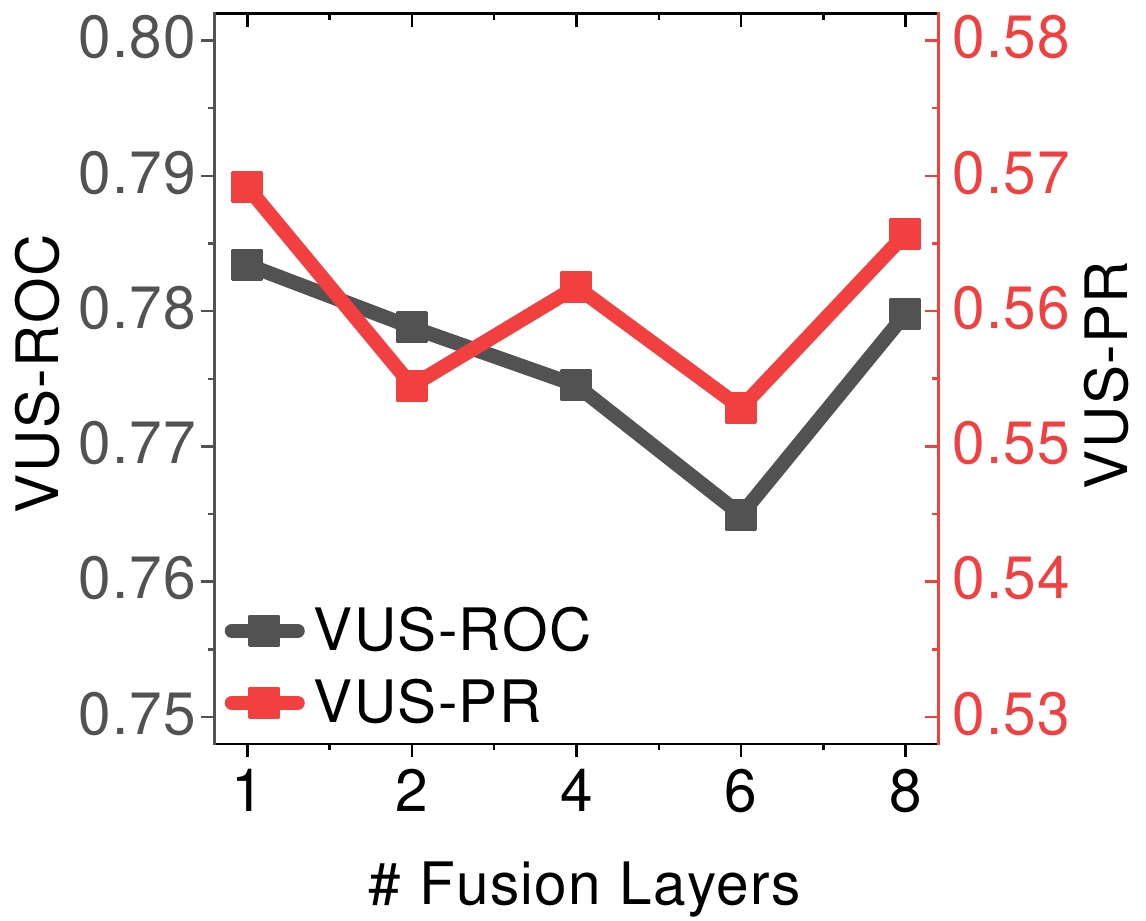}
\llap{(a)~}
\includegraphics[width=0.31\columnwidth]{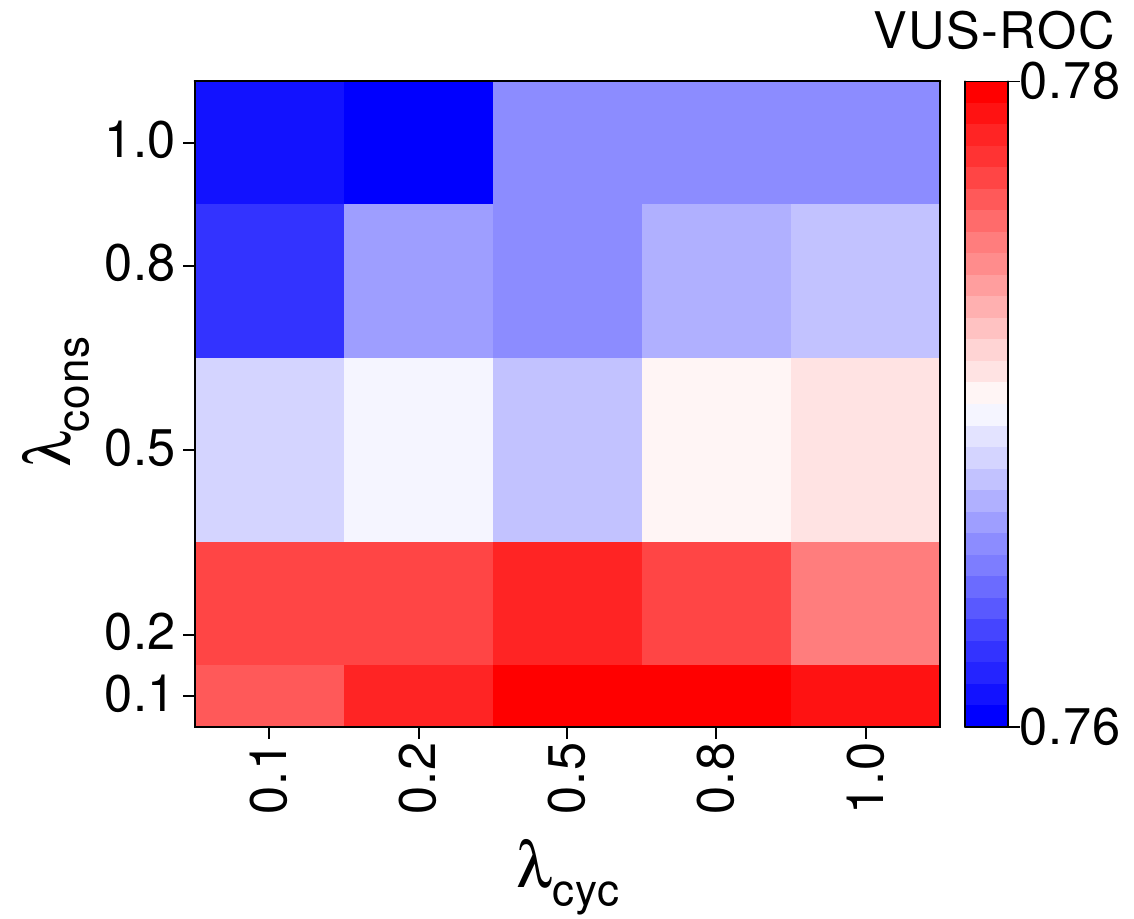}
\llap{(c)~}
\includegraphics[width=0.31\columnwidth]{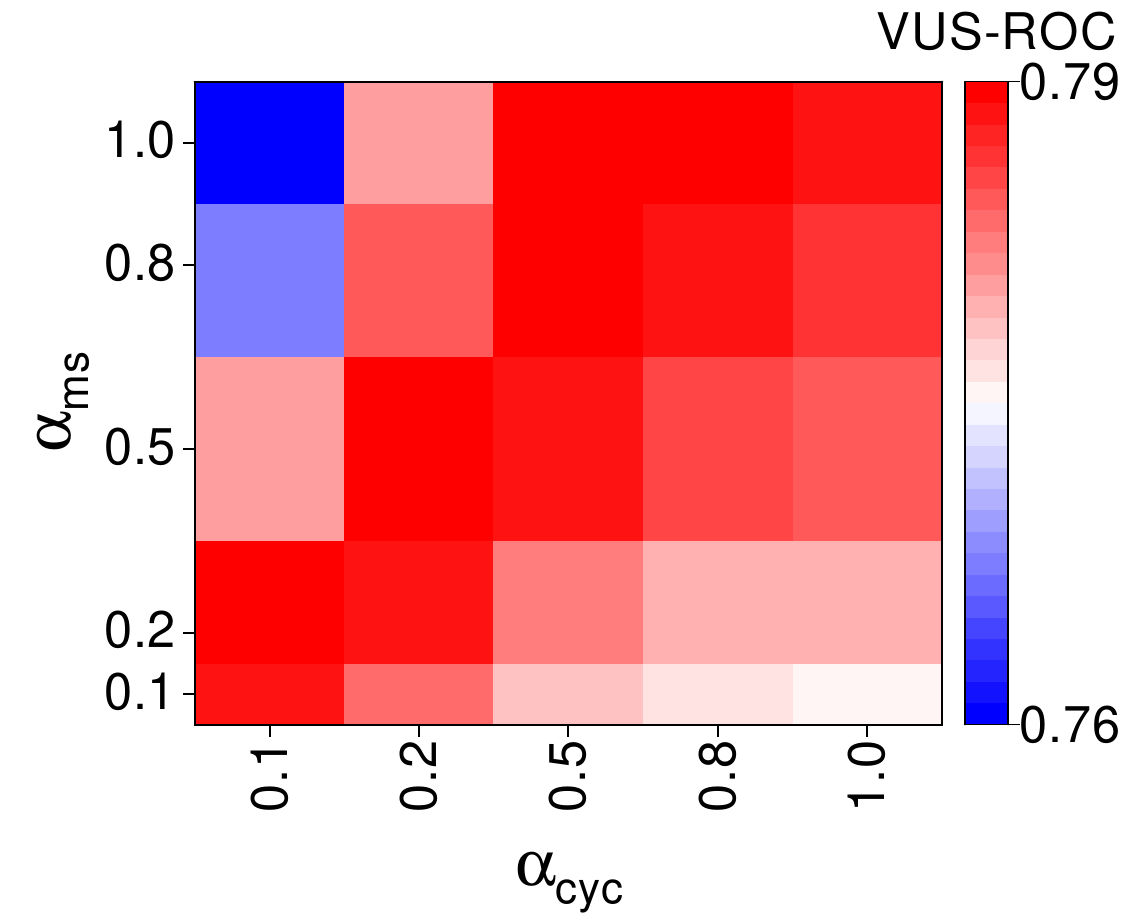}
\llap{(e)~}

\includegraphics[width=0.31\columnwidth]{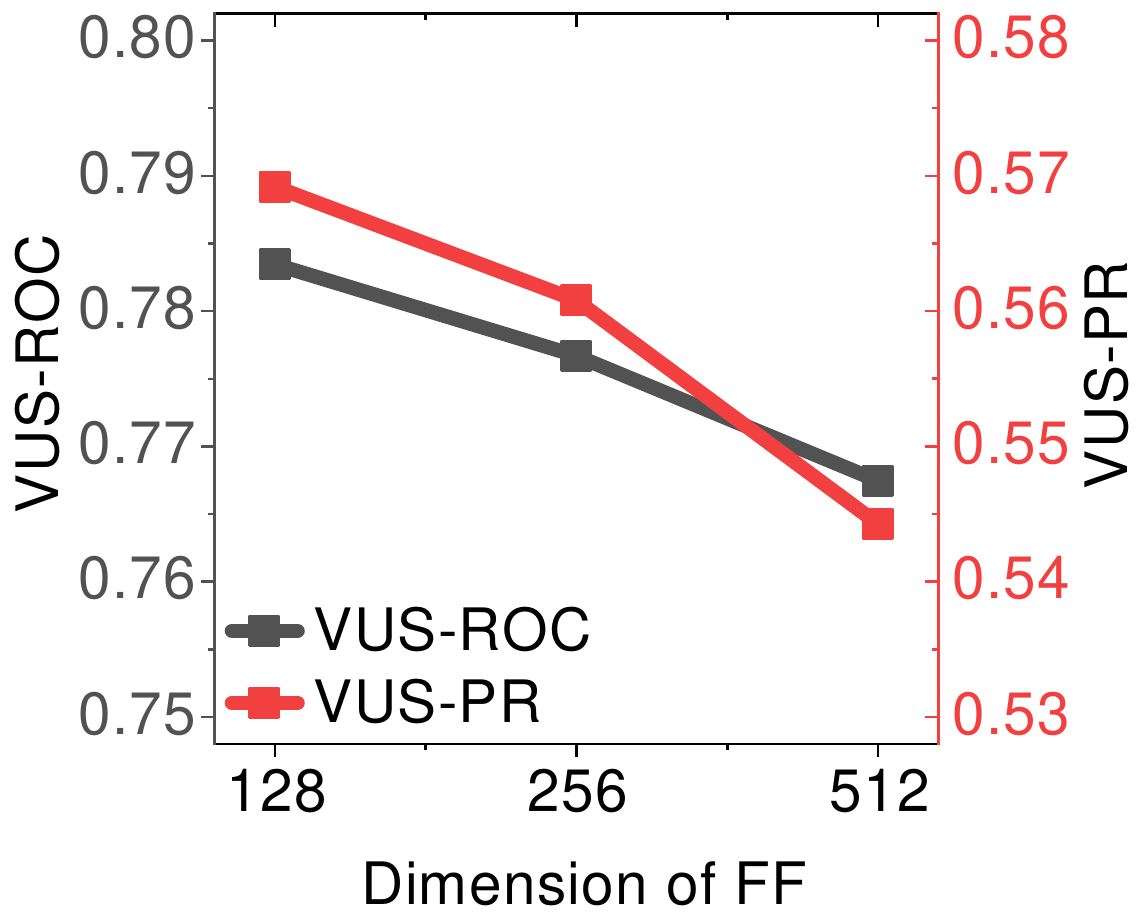}
\llap{(b)~}
\includegraphics[width=0.31\columnwidth]{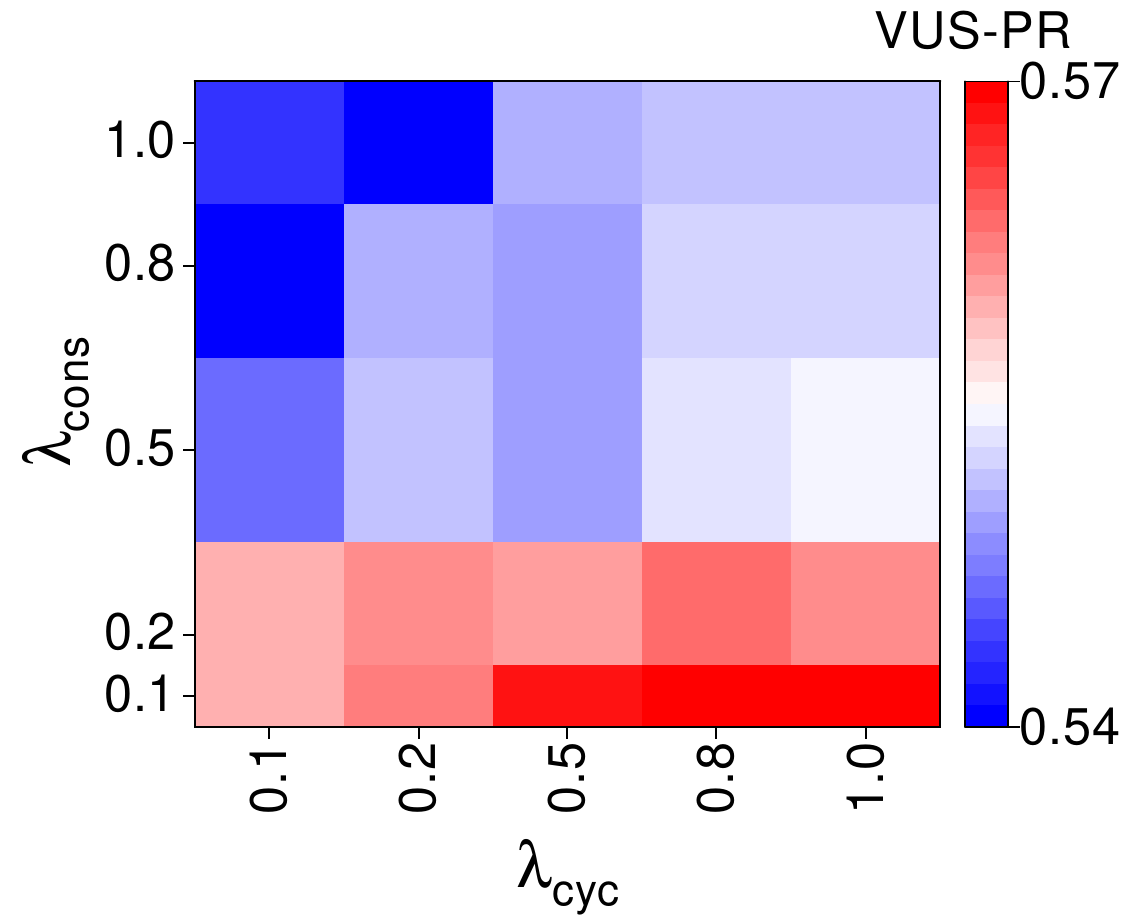}
\llap{(d)~}
\includegraphics[width=0.31\columnwidth]{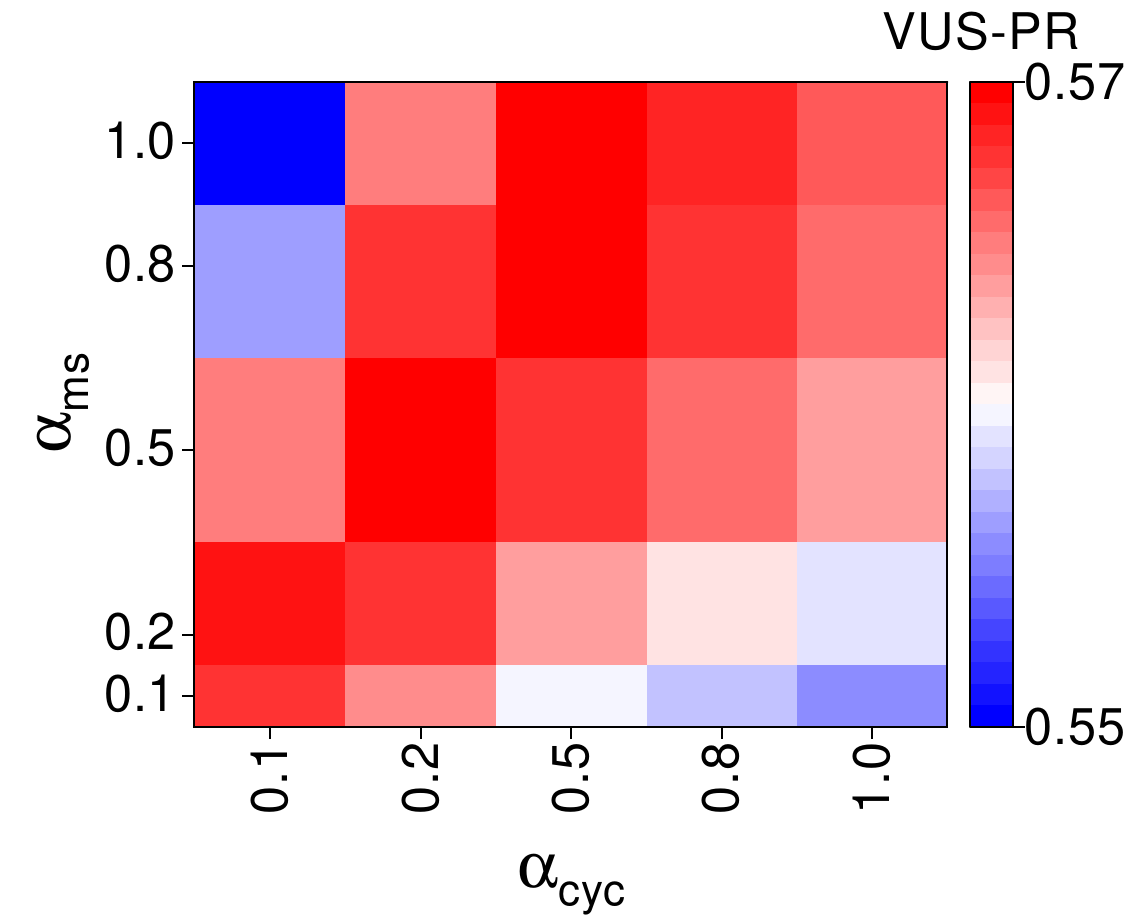}
\llap{(f)~}
\caption{Fusion hyper-parameters sensitivity analysis on PSM. (a, b) present the depth and width analysis. (c, d) show the sensitivity to $\lambda_{\mathrm{cyc}}$ and $\lambda_{\mathrm{cons}}$. (e, f): $\alpha_{\mathrm{cyc}}$ and  $\alpha_{\mathrm{ms}}$ sensitivity.  More results are listed in Appendix~\ref{sec_cyc_cons_appendix}.}
\label{fig_fusion_hyper_psm}
\end{figure}

\begin{figure}[thbp]
\centering
\includegraphics[width=.31\columnwidth]{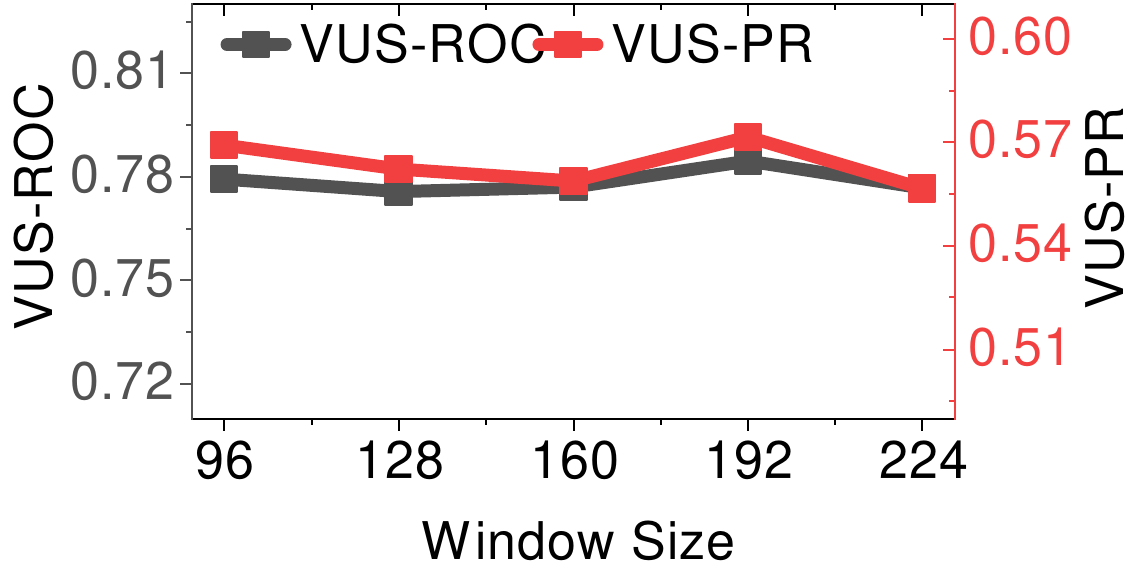}
\llap{(a)~}
\includegraphics[width=.31\columnwidth]{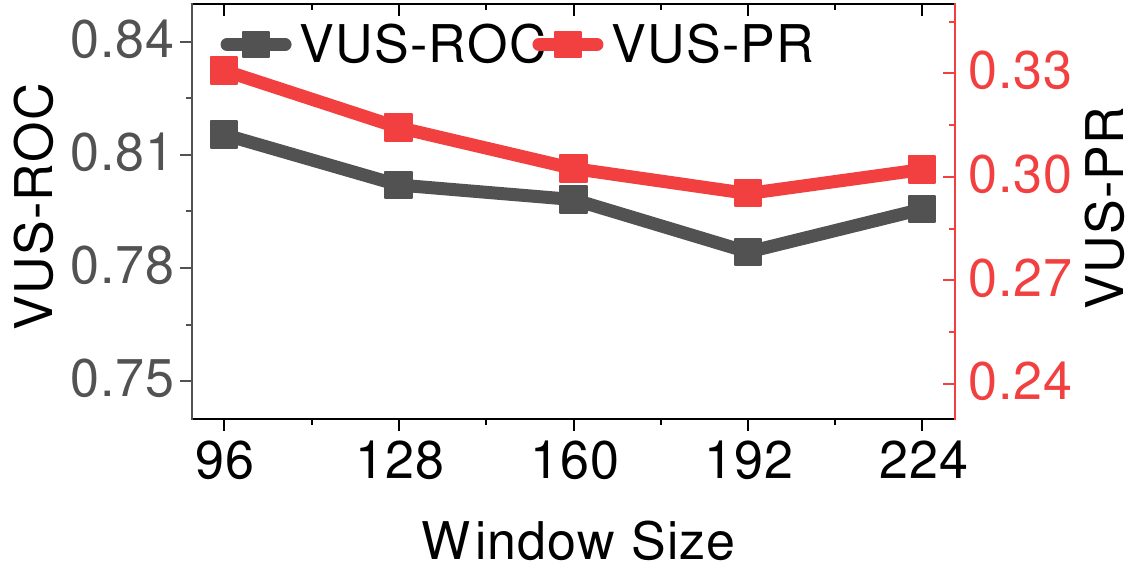}
\llap{(b)~}
\includegraphics[width=.31\columnwidth]{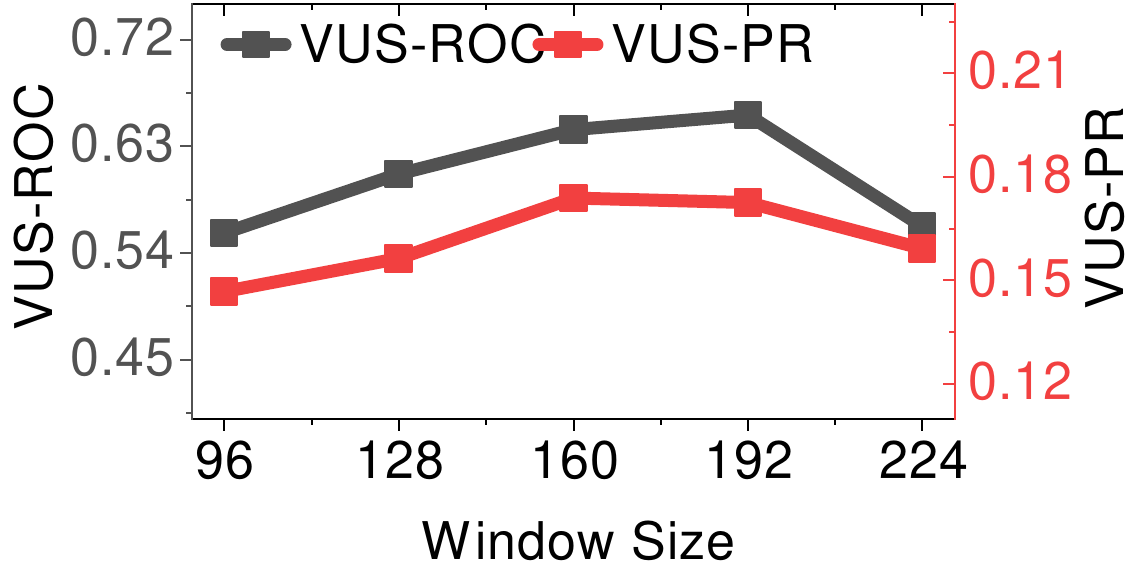}
\llap{(c)~}
\caption{Effect of window size. (a): PSM. (b): MSL. (c): SMAP. }
\label{fig_effect_seqlen}
\end{figure}

\begin{figure}[thbp]
\centering
\includegraphics[width=0.31\columnwidth]{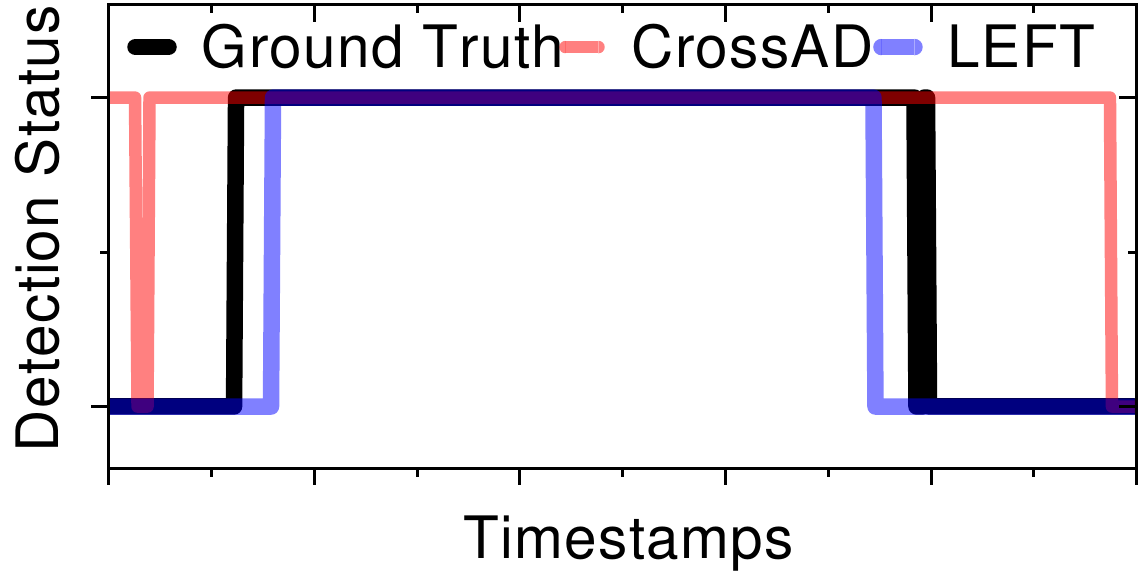}
\llap{(a)~}
\includegraphics[width=0.31\columnwidth]{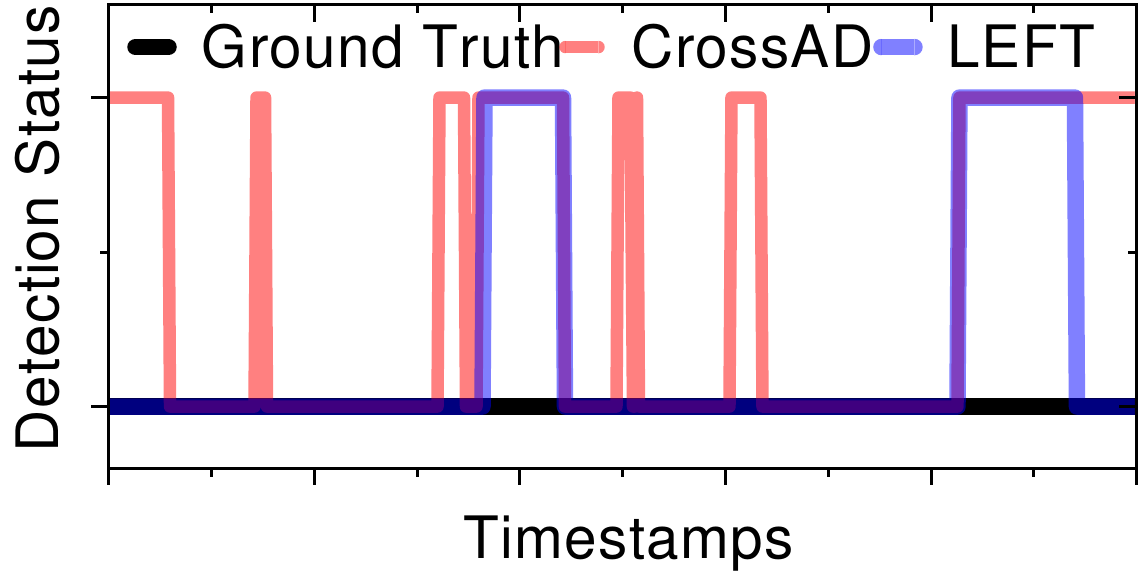}
\llap{(b)~}
\includegraphics[width=0.31\columnwidth]{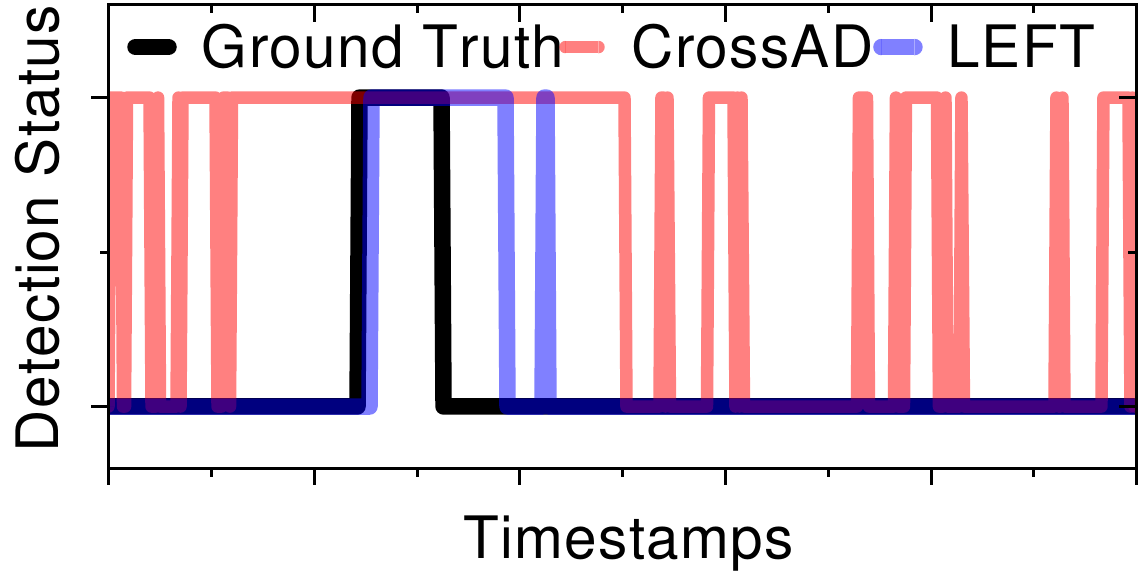}
\llap{(c)~}
\caption{Qualitative anomaly detection results. (a): PSM. (b): MSL. (c): SMAP. High denotes anomalies, and low denotes normal ones.}
\label{fig_example}
\end{figure}

Figs.~\ref{fig_fusion_hyper_psm} (a, b) show that increasing the depth or width of the fusion layer does not produce a meaningful performance gain, which aligns with the observation in Sec. \ref{sec_fusion_strategy} that stronger fusion is not always better. In this setting, a heavier fusion tends to over-mix the views, so distinctive evidence becomes less separable in the shared representation.

\subsubsection{$\lambda_{\mathrm{cyc}}$ and  $\lambda_{\mathrm{cons}}$ Sensitivity} 
\label{sec_cyc_cons}

Figs.~\ref{fig_fusion_hyper_psm} (c, d) suggest that detection quality depends on the choice of cycle consistency and cross-path consistency. LEFT remains competitive when cycle consistency is strong and cross-path consistency is relatively weak. Pushing cross-path consistency to large values does not yield stable gains. The results suggest that an overly strong cross-path consistency weight can over-align the pathways and weaken fault evidence. A stronger cycle consistency weight is often beneficial because it better exploits the analysis-synthesis link between time and frequency views. 

\subsubsection{$\alpha_{\mathrm{cyc}}$ and  $\alpha_{\mathrm{ms}}$ Sensitivity} 
\label{sec_alpha_score}
Figs.~\ref{fig_fusion_hyper_psm} (e, f) show that both VUS-ROC and VUS-PR depend on the choice of $\alpha_{\mathrm{cyc}}$ and $\alpha_{\mathrm{ms}}$, and neither metric improves monotonically when only one weight is increased. High-score regions concentrate in a middle range of the grid, while several extreme settings lead to lower values. This pattern echoes our main contribution that LEFT gains from combining cycle disagreement and reconstruction disagreement with multi-scale residual evidence, where reliable detection comes from their complementary use rather than amplifying a single term.

\subsubsection{Effect of Window Size} 
As shown in Fig.~\ref{fig_effect_seqlen}, LEFT shows stable performance under different window sizes, with dataset-dependent optima. PSM changes only slightly and peaks at 192, MSL favors shorter windows, while SMAP improves up to 192 but drops at 224. These results suggest that LEFT is robust within a practical window-size range, though moderate dataset-specific tuning remains useful.

\subsection{Visualization} 

Fig.~\ref{fig_example} presents qualitative results on PSM, MSL, and SMAP. In these examples, CrossAD assigns elevated scores to many normal timestamps, leading to frequent off-interval alarms and more false positives. LEFT produces more concentrated predictions and aligns more closely with the annotated anomalous intervals on PSM and SMAP, although false alarms still remain in some cases.

\section{Conclusion}
We propose LEFT, a unified tri-view framework for unsupervised TSAD. LEFT constructs time tokens, frequency tokens from a differentiable STFT, and multi-scale structural tokens from a Nyquist-constrained learnable filterbank whose residual mask covers the spectrum above the learned structural bands. LEFT uses lightweight token interactions to fuse the three views and expose cross-view inconsistency. It also applies cross-scale reconstruction consistency and time–frequency cycle consistency, which makes the inconsistency signal more reliable. Experiments on real-world benchmarks show that LEFT improves detection quality and efficiency. Compared with strong baselines, LEFT improves VUS-ROC and VUS-PR by about 3\% and 6\%, respectively, while reducing FLOPs by over 80\%, speeding up training by about $8\times$. Future work will study when cross-view inconsistency is most informative and will improve interpretability by linking detected anomalies to specific time–frequency or cross-scale conflicts.

\section{Acknowledgments}
This work is supported by the National Natural Science Foundation of China (No. 12433011), the Guangdong Basic and Applied Basic Research Foundation (No. 2024A1515011962), the Australian Research Council (Nos. FT210100624, DP260100326, DE230101033, DP240101814, LP230200892, and LP240200546). 

\bibliographystyle{ACM-Reference-Format}
\bibliography{sample-base}

\appendix
\section{Appendices}

\renewcommand{\thefigure}{\thesection\arabic{figure}}  
\renewcommand{\thetable}{\thesection\arabic{table}}  
\setcounter{figure}{0}  
\setcounter{table}{0}

\subsection{Ablation Study Details}
\label{sec_ablation_appendix}
This appendix reports the ablation results in Table \ref{tab_ablation_results_appendix}.
LEFT achieves the best performance on each dataset and on average, reaching 0.7518 in VUS-ROC and 0.3590 in VUS-PR. Among single-component variants, cycle consistency brings the largest gain: the cycle-only setting in Row~3 raises the average to 0.6880 / 0.3320, while the other settings stay near the low baseline range. Row~13 confirms its importance, as removing cycle consistency with the other components retained sharply drops performance to 0.5422 / 0.2411. With cycle consistency, adding tri-view interaction or the learnable filterbank brings further gains, as Rows~9 and 7 show against Row~3, while Rows~11 and~12 remain below the full model. Cross-path consistency contributes beyond the cycle constraint, since removing it in Row~14 reduces the average from 0.7518 / 0.3590 to 0.7266 / 0.3416. Overall, time-frequency agreement is central to reliable learning, while the other components provide additional gains.

\begin{table}[t]
\centering
\caption{Detailed ablation studies. T.V.I., L.F.B., C.C., and C.P.C. denote tri-view interaction, learnable filterbank, cycle consistency, and cross-path consistency, respectively. For L.F.B., `\ding{51}'/`\ding{55}' denotes learnable/fixed filterbank.}
\resizebox{\linewidth}{!}{
\begin{tabular}{ccccc|cc|cc|cc|cc}
\hline
\multicolumn{5}{c|}{Dataset} & \multicolumn{2}{c|}{PSM} & \multicolumn{2}{c|}{MSL} & \multicolumn{2}{c|}{SMAP} & \multicolumn{2}{c}{avg.} \\
\hline
& T.V.I. & L.F.B. & C.C. & C.P.C. & V-R & V-P & V-R & V-P & V-R & V-P & V-R & V-P
\\\hline
1 & \ding{51} & \ding{55} & \ding{55} & \ding{55} & 0.6513 & 0.4438 & 0.5495 & 0.1849 & 0.4212 & 0.1138 & 0.5407 & 0.2475\\
2 & \ding{55} & \ding{51} & \ding{55} & \ding{55} & 0.6523 & 0.4447 & 0.5584 & 0.1869 & 0.4586 & 0.1202 & 0.5564 & 0.2506\\
3 & \ding{55} & \ding{55} & \ding{51} & \ding{55} & 0.7748 & 0.5491 & 0.7616 & 0.3090 & 0.5277 & 0.1378 & 0.6880 & 0.3320\\
4 & \ding{55} & \ding{55} & \ding{55} & \ding{51} & 0.6497 & 0.4415 & 0.5226 & 0.1572 & 0.4727 & 0.1269 & 0.5483 & 0.2419\\
5 & \ding{55} & \ding{55} & \ding{51} & \ding{51} & 0.7819 & 0.5684 & 0.7424 & 0.3075 & 0.4773 & 0.1275 & 0.6672 & 0.3344\\
6 & \ding{55} & \ding{51} & \ding{55} & \ding{51} & 0.6498 & 0.4412 & 0.5558 & 0.1896 & 0.4371 & 0.1182 & 0.5476 & 0.2497\\
7 & \ding{55} & \ding{51} & \ding{51} & \ding{55} & 0.7816 & 0.5582 & 0.8146 & 0.3291 & 0.5521 & 0.1424 & 0.7161 & 0.3432\\
8 & \ding{51} & \ding{55} & \ding{55} & \ding{51} & 0.6490 & 0.4416 & 0.5391 & 0.1779 & 0.4398 & 0.1172 & 0.5427 & 0.2456\\
9 & \ding{51} & \ding{55} & \ding{51} & \ding{55} & 0.7795 & 0.5614 & 0.8124 & 0.3316 & 0.5922 & 0.1556 & 0.7280 & 0.3495\\
10 & \ding{51} & \ding{51} & \ding{55} & \ding{55} & 0.6527 & 0.4440 & 0.5448 & 0.1811 & 0.4848 & 0.1277 & 0.5608 & 0.2510\\
11 & \ding{55} & \ding{51} & \ding{51} & \ding{51} & 0.7814 & 0.5653 & 0.8142 & 0.3235 & 0.5814 & 0.1477 & 0.7256 & 0.3455\\
12 & \ding{51} & \ding{55} & \ding{51} & \ding{51} & 0.7748 & 0.5564 & 0.8082 & 0.3254 & 0.5952 & 0.1549 & 0.7261 & 0.3456\\
13 & \ding{51} & \ding{51} & \ding{55} & \ding{51} & 0.6497 & 0.4407 & 0.5283 & 0.1666 & 0.4487 & 0.1161 & 0.5422 & 0.2411\\
14 & \ding{51} & \ding{51} & \ding{51} & \ding{55} & 0.7753 & 0.5516 & 0.8100 & 0.3251 & 0.5946 & 0.1480 & 0.7266 & 0.3416\\

LEFT & \ding{51} & \ding{51} & \ding{51} & \ding{51}  & \textbf{0.7836} & \textbf{0.5701} & \textbf{0.8157} & \textbf{0.3342} & \textbf{0.6562} & \textbf{0.1726} & \textbf{0.7518} & \textbf{0.3590}\\
\hline

\end{tabular}
}
\label{tab_ablation_results_appendix}
\end{table}

\begin{table}[t]
\centering
\caption{Fusion strategy ablation. 
}
\resizebox{\linewidth}{!}{
\begin{tabular}{ccccrr}
\hline
\multirow{2}{*}{Dataset} & \multirow{2}{*}{Fusion Method} & \multirow{2}{*}{VUS-ROC} & \multirow{2}{*}{VUS-PR} & \multicolumn{1}{c}{Training } & \multicolumn{1}{c}{Inference}\\
& & & & \multicolumn{1}{c}{Time (s / epoch)} & \multicolumn{1}{c}{Time (s / batch)}\\
\hline
\multirow{5}{*}{PSM} 
& $M\leftrightarrow F\leftrightarrow T$ & \textbf{0.7860} & 0.5692 & 30.29 & 0.01699\\
& $M\leftrightarrow F$ & 0.7687 & 0.5479 & 20.56 & 0.01080\\
& $M\leftrightarrow T$ & 0.7558 & 0.5371 & 18.93 & 0.01060\\
& $T\leftrightarrow F$ & 0.7790 & 0.5593 & 18.79 & 0.01066\\
& Default & 0.7836 & \textbf{0.5701} & 25.66 & 0.01527\\
\hline
\multirow{5}{*}{MSL} 
& $M\leftrightarrow F\leftrightarrow T$ & 0.8106 & 0.3246 & 7.65 & 0.00854\\
& $M\leftrightarrow F$ & 0.8069 & 0.3255 & 6.62 & 0.00788\\
& $M\leftrightarrow T$ & 0.8051 & 0.3149 & 6.31 & 0.00787\\
& $T\leftrightarrow F$ & 0.8121 & 0.3265 & 6.46 & 0.00782\\
& Default & \textbf{0.8157} & \textbf{0.3342} & 7.02 & 0.00841\\
\hline
\multirow{5}{*}{SMAP}
& $M\leftrightarrow F\leftrightarrow T$ & 0.4776 & 0.1295 & 17.71 & 0.00854\\
& $M\leftrightarrow F$ & 0.5640 & 0.1419 & 15.40 & 0.00790\\
& $M\leftrightarrow T$ & 0.4330 & 0.1163 & 14.62 & 0.00788\\
& $T\leftrightarrow F$ & 0.6203 & 0.1634 & 15.50 & 0.00854\\ 
& Default & \textbf{0.6562} & \textbf{0.1726} & 16.34 & 0.00856\\
\hline
\end{tabular}
}
\label{tab_fusion_results_appendix}
\end{table}

\begin{table}[t]
\centering
\caption{JS and WD comparison.}
\resizebox{.8\linewidth}{!}{
\begin{tabular}{c|cc|cc|cc}
\hline
\multirow{2}{*}{} & \multicolumn{2}{c|}{PSM} & \multicolumn{2}{c|}{MSL} & \multicolumn{2}{c}{SMAP} \\
& V-R & V-P & V-R & V-P & V-R & V-P 
\\\hline
LEFT \textit{w} WD & 0.7734 & 0.5577 & 0.8128 & 0.3321 & 0.6123 & 0.1579\\
LEFT \textit{w} JS & \textbf{0.7836} & \textbf{0.5701} & \textbf{0.8157} & \textbf{0.3342} & \textbf{0.6562} & \textbf{0.1726}\\
\hline
\end{tabular}
}
\label{tab_js_wd}
\end{table}

\subsection{Fusion Strategy Ablation}
\label{sec_fusion_results_appendix}
Table~\ref{tab_fusion_results_appendix} compares fusion strategies on PSM, MSL, and SMAP using VUS-ROC, VUS-PR, and measured runtime. The proposed strategy gives the most consistent detection quality, achieving the best VUS-PR on PSM and the best VUS-ROC and VUS-PR on MSL and SMAP. Although all-pairs fusion $M \leftrightarrow F \leftrightarrow T$ obtains the highest VUS-ROC on PSM, it has higher compute cost and drops markedly on SMAP, indicating that denser interaction does not provide stable gains. The $T \leftrightarrow F$ variant remains competitive on PSM and MSL and clearly outperforms $M \leftrightarrow F$ and $M \leftrightarrow T$ on SMAP, suggesting the importance of direct time-frequency exchange. In contrast, strategies mixing $M$ with only one branch show weaker ranking quality, despite runtimes similar to the proposed strategy.

\subsection{Fusion Hyper-parameters Sensitivity} 
\label{sec_cyc_cons_appendix}
\begin{figure*}[t]
\centering
\includegraphics[width=0.33\columnwidth]{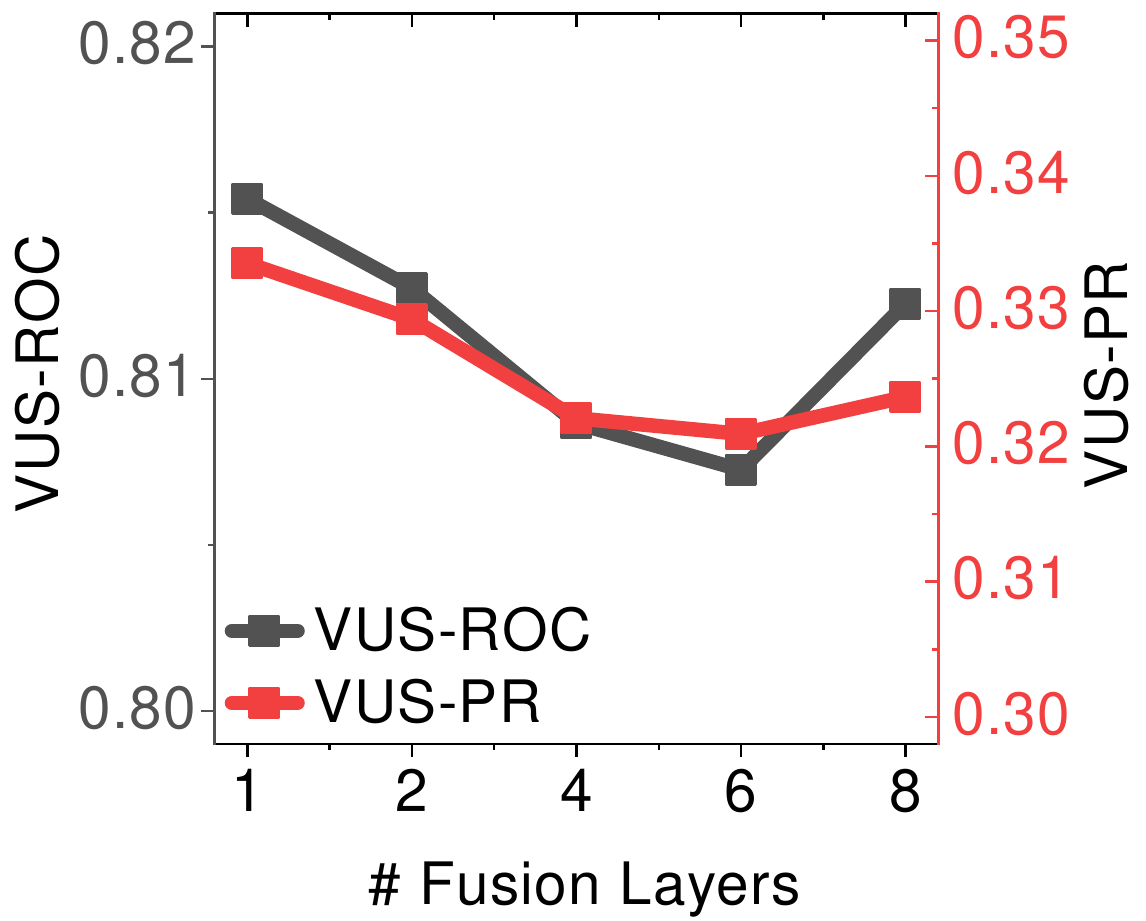}
\llap{(a.1)~}
\includegraphics[width=0.33\columnwidth]{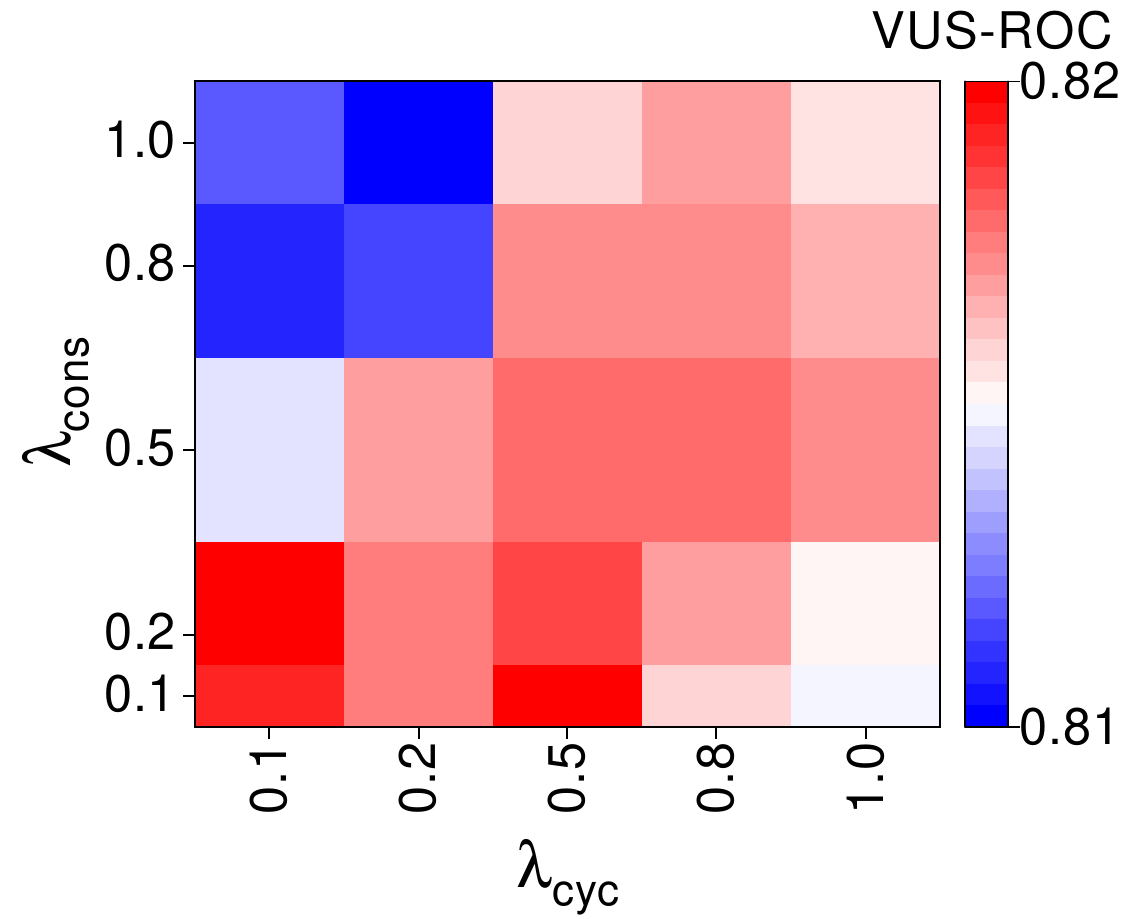}
\llap{(a.3)~}
\includegraphics[width=0.33\columnwidth]{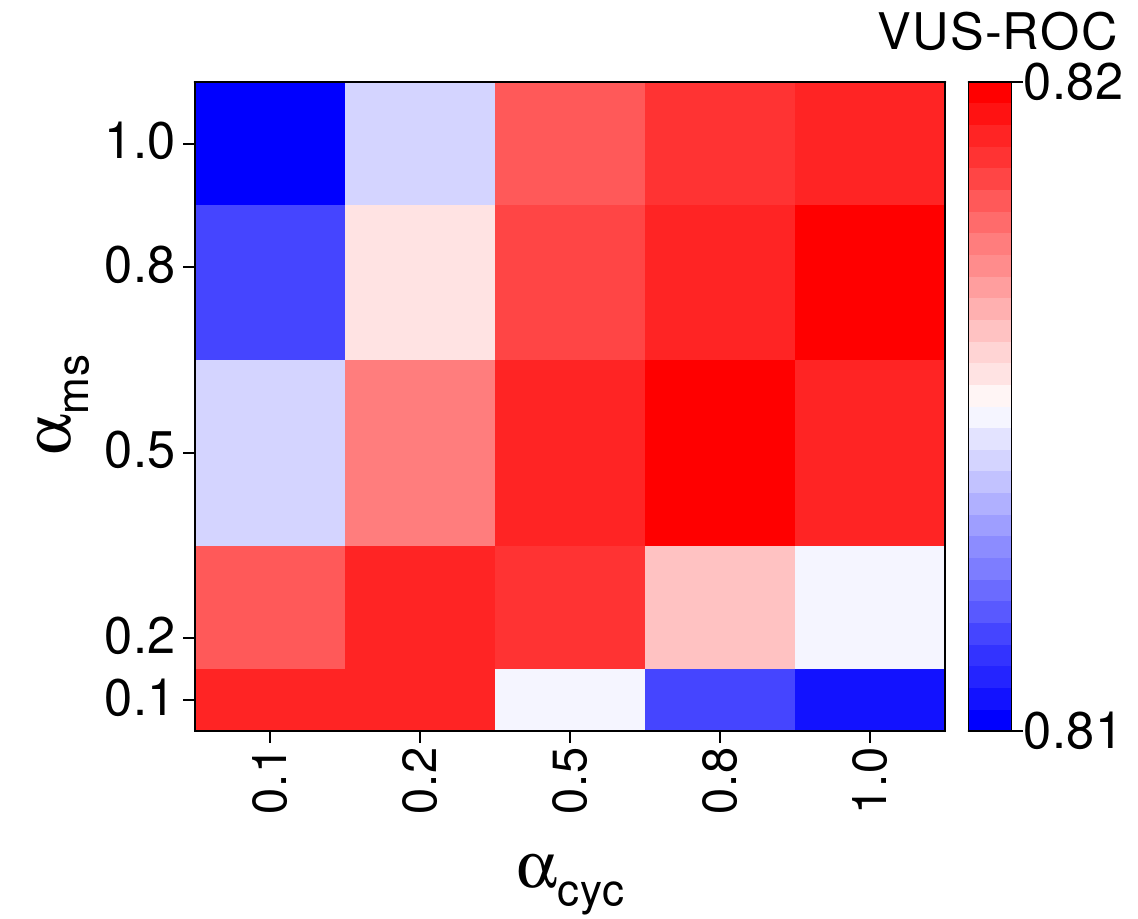}
\llap{(a.5)~}
\includegraphics[width=0.33\columnwidth]{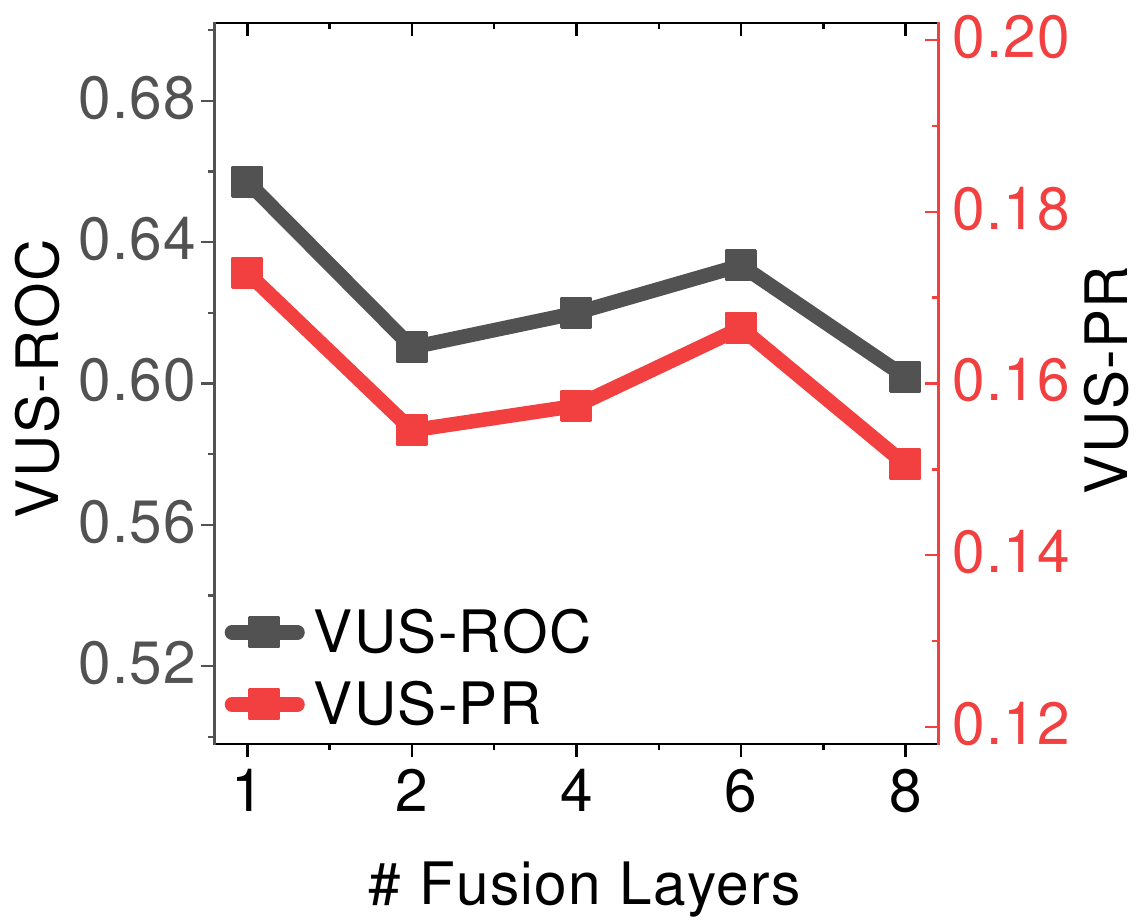}
\llap{(b.1)~}
\includegraphics[width=0.33\columnwidth]{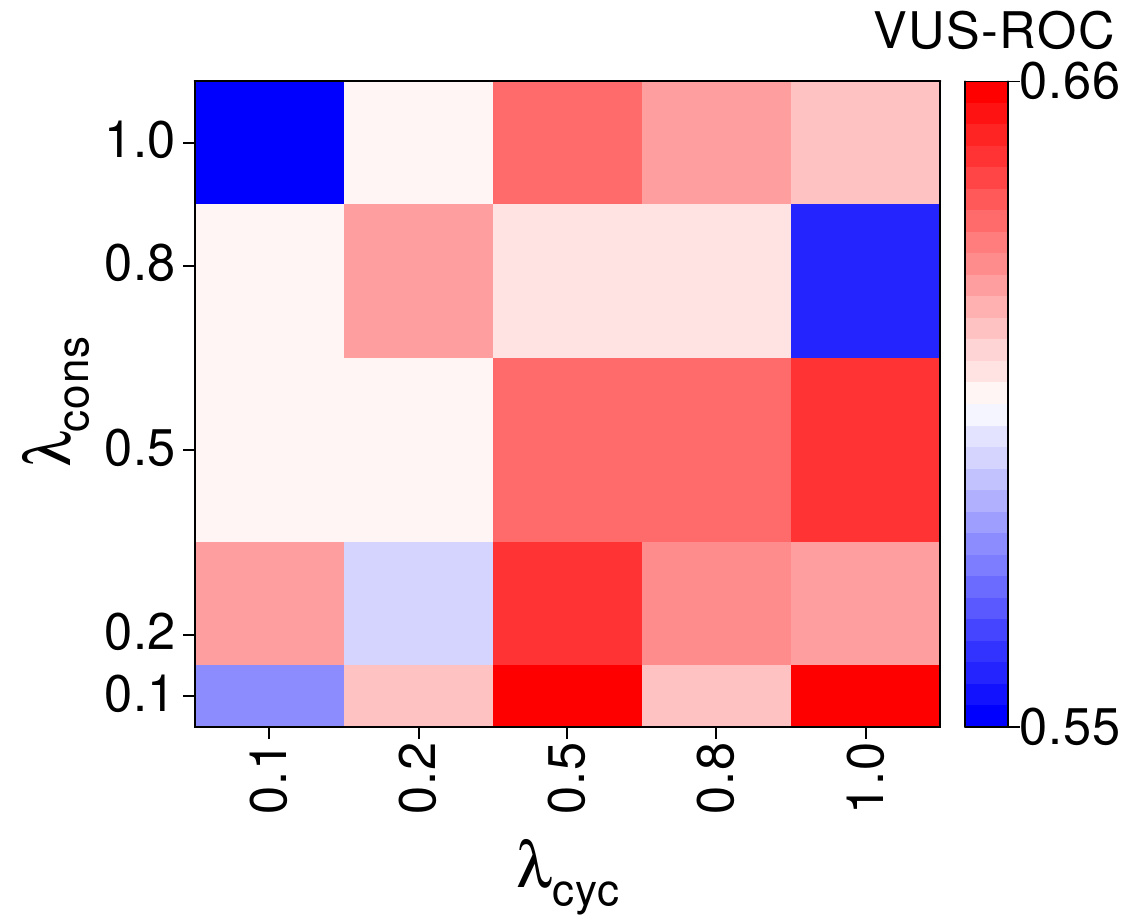}
\llap{(b.3)~}
\includegraphics[width=0.33\columnwidth]{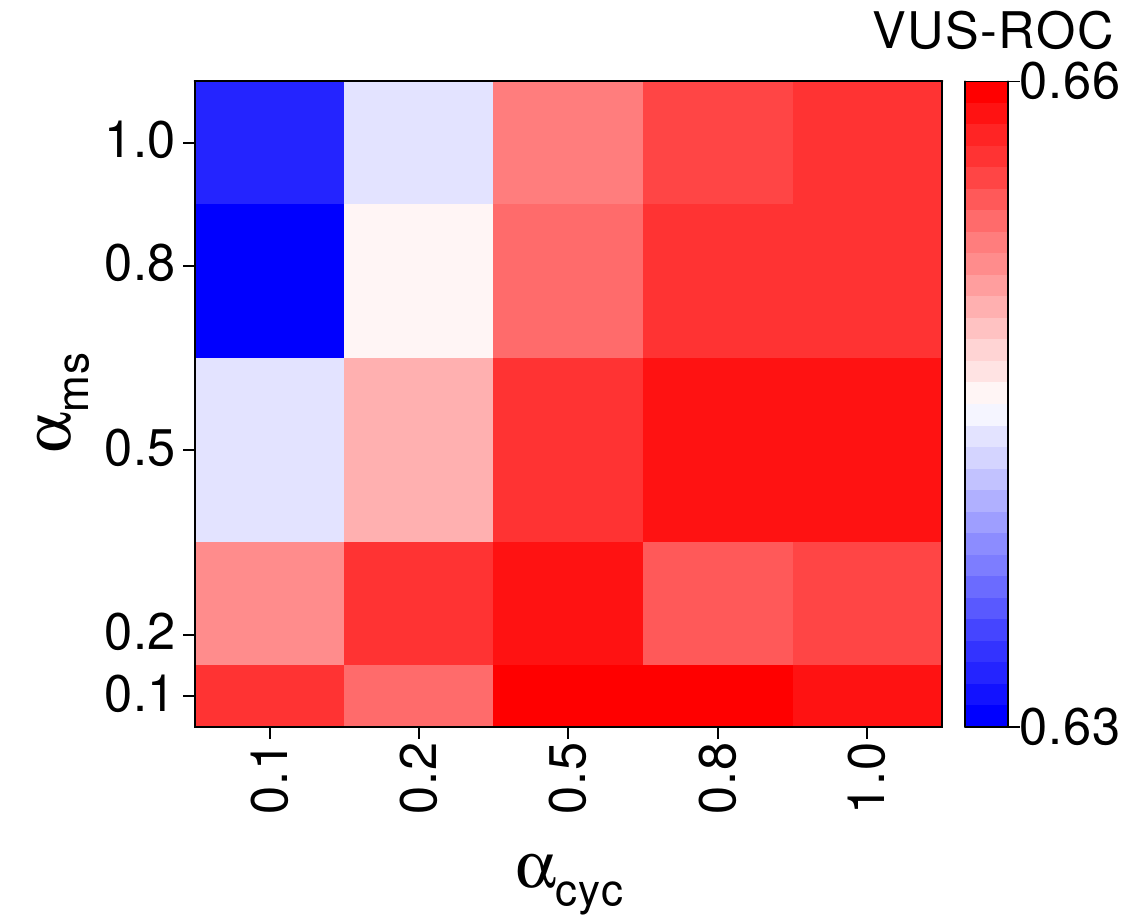}
\llap{(b.5)~}

\includegraphics[width=0.33\columnwidth]{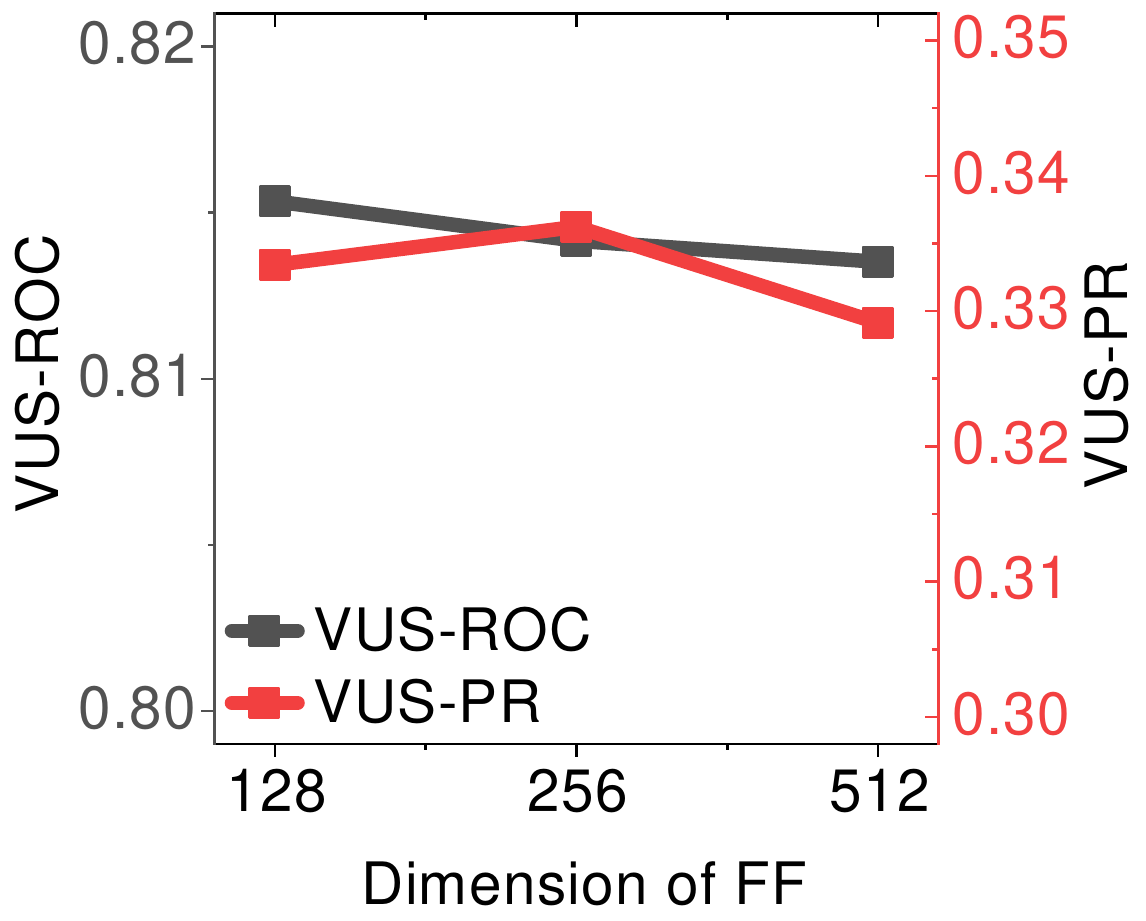}
\llap{(a.2)~}
\includegraphics[width=0.33\columnwidth]{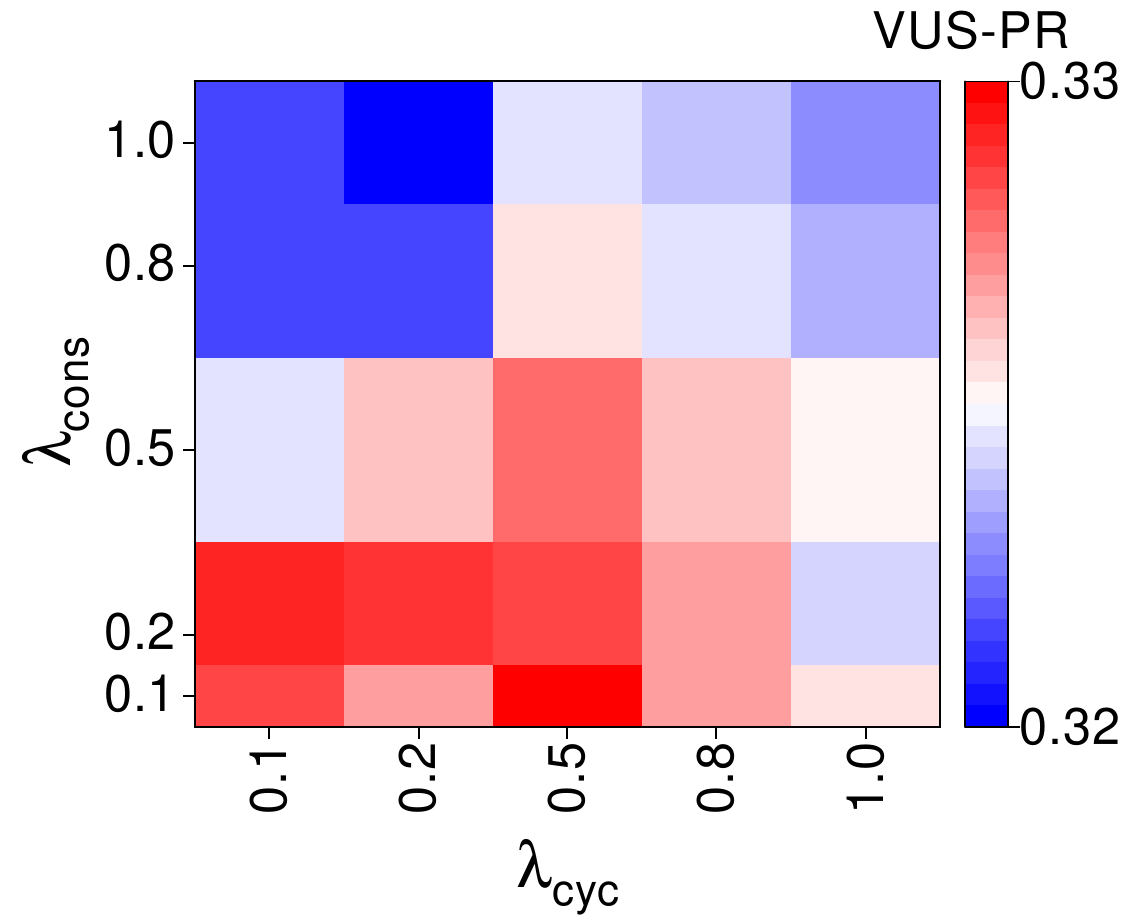}
\llap{(a.4)~}
\includegraphics[width=0.33\columnwidth]{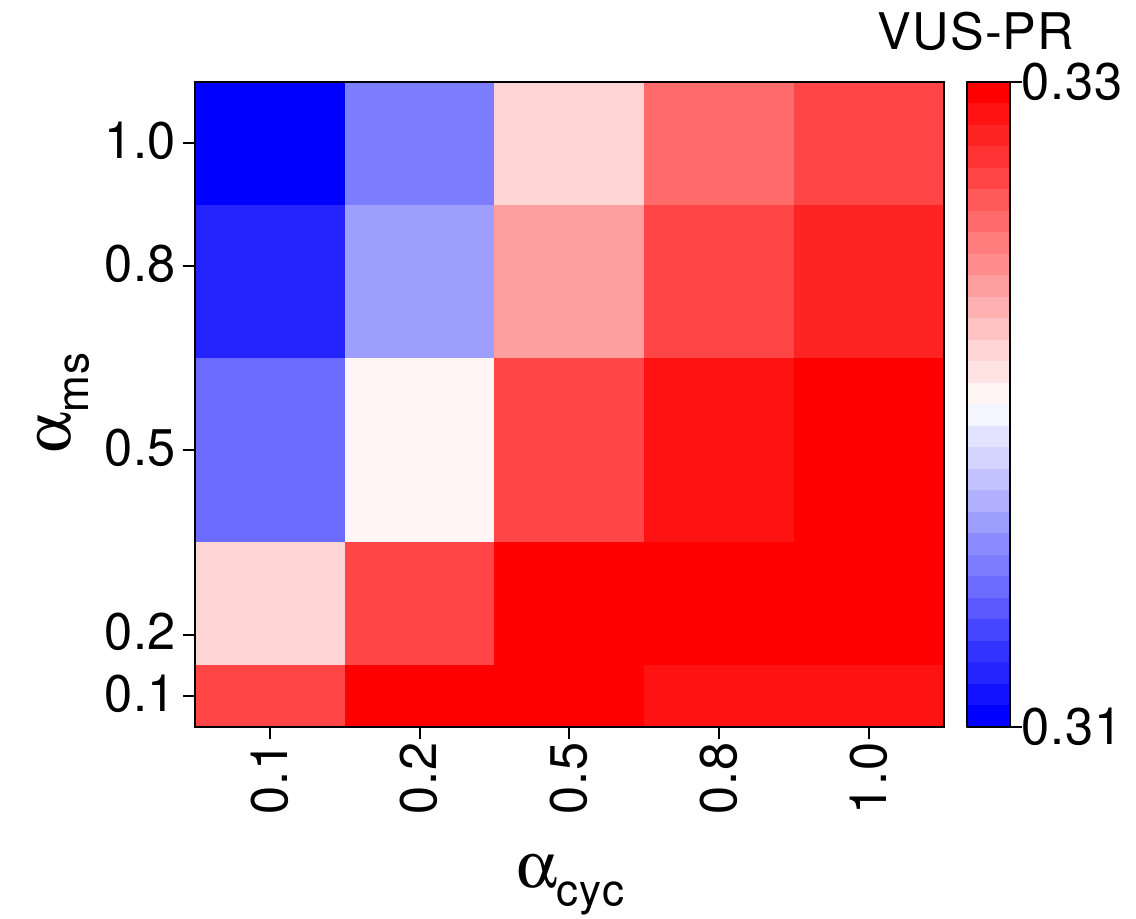}
\llap{(a.6)~}
\includegraphics[width=0.33\columnwidth]{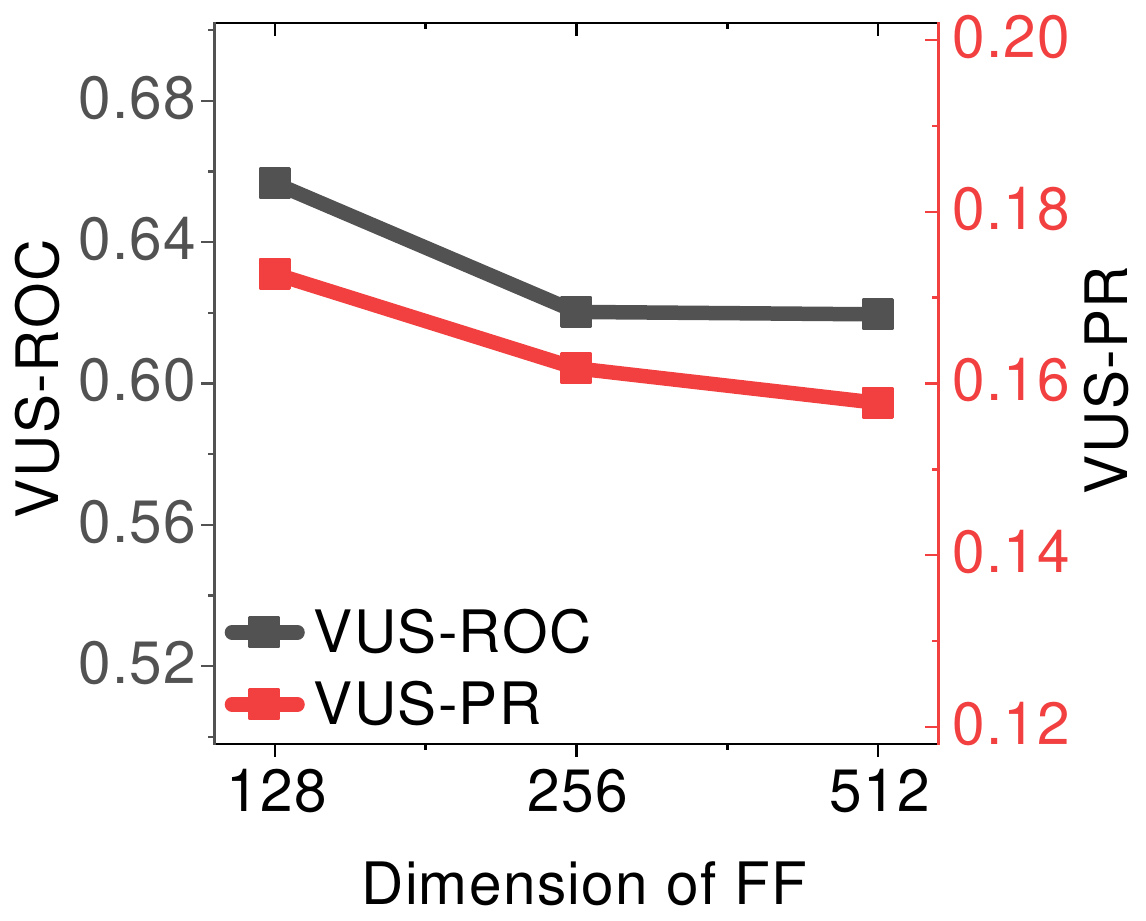}
\llap{(b.2)~}
\includegraphics[width=0.33\columnwidth]{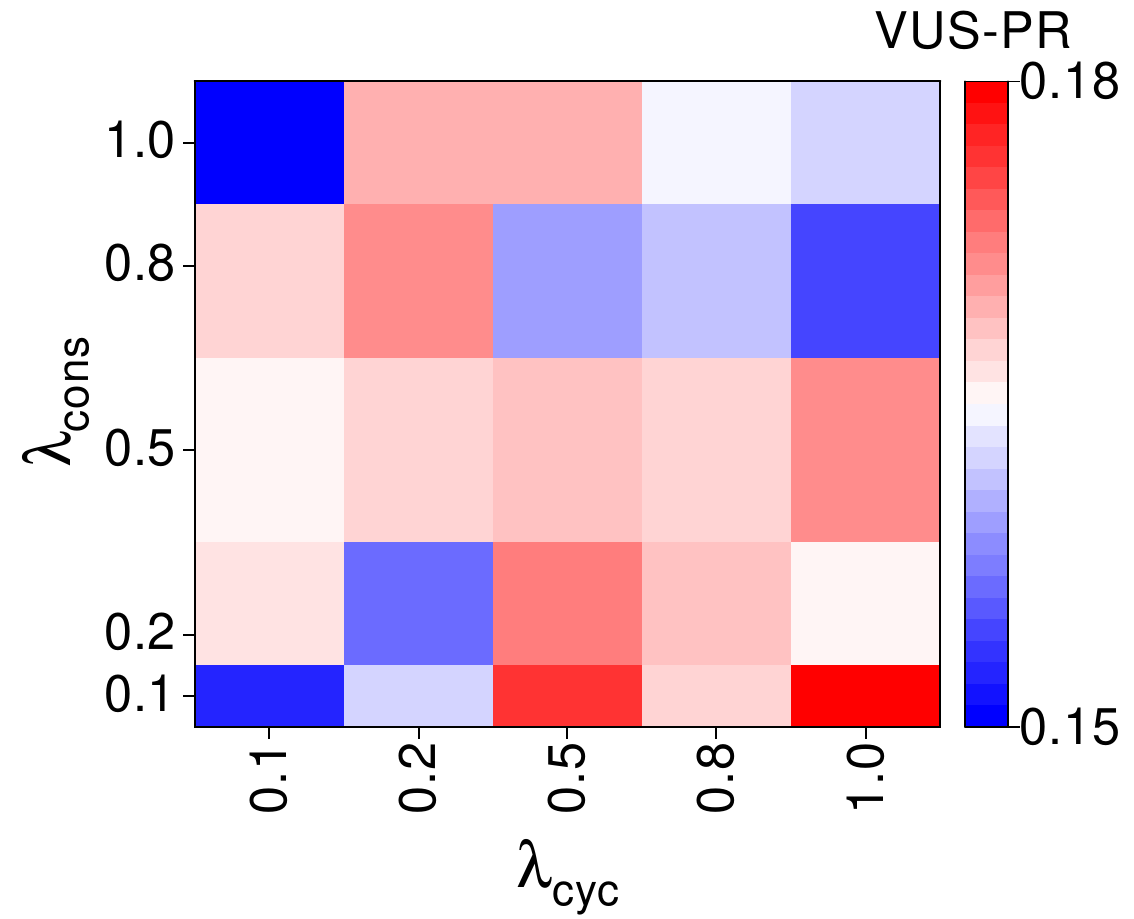}
\llap{(b.4)~}
\includegraphics[width=0.33\columnwidth]{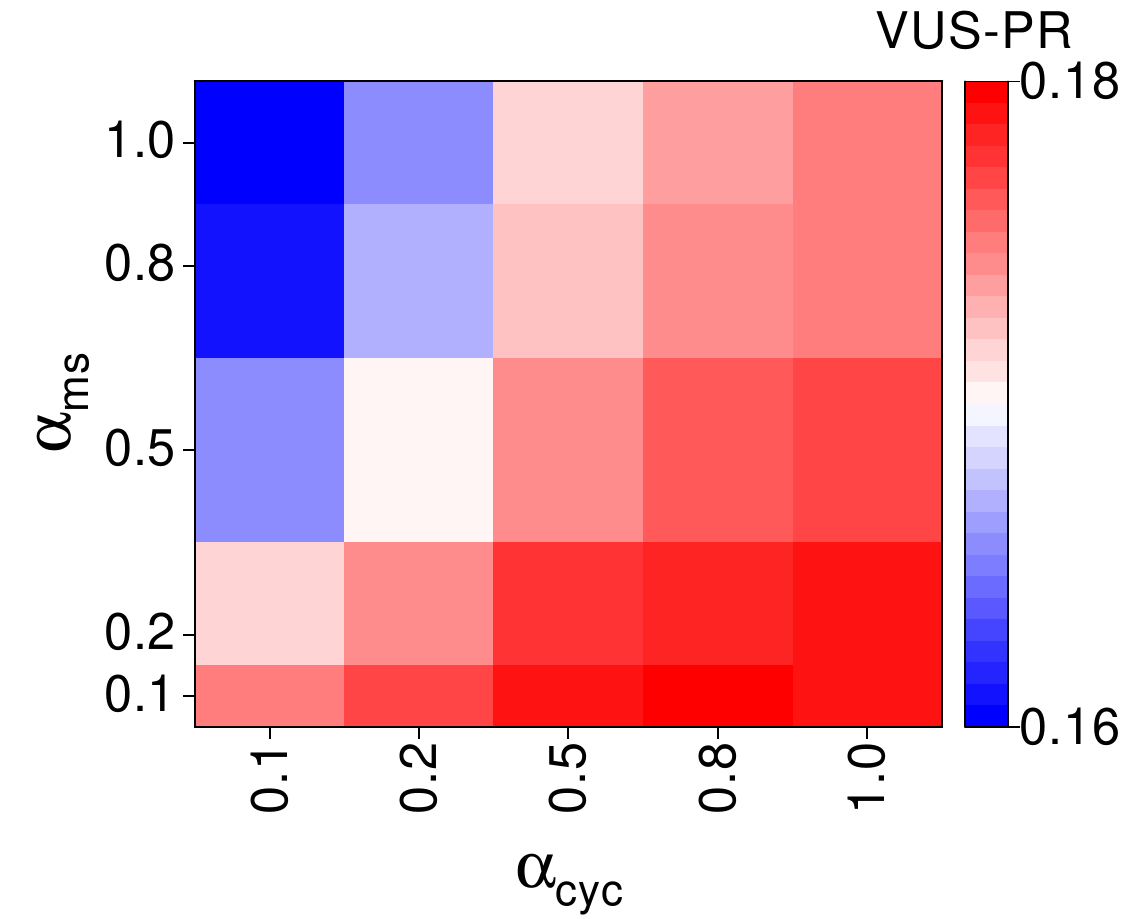}
\llap{(b.6)~}
\caption{Fusion hyper-parameters sensitivity analysis on the MSL (a.x) and SMAP (b.x). (x.1) and (x.2) present the depth and width analysis. (x.3) and (x.4) show the sensitivity to $\lambda_{\mathrm{cyc}}$ and $\lambda_{\mathrm{cons}}$. (x.5) and (x.6): $\alpha_{\mathrm{cyc}}$ and  $\alpha_{\mathrm{ms}}$ sensitivity.}
\label{fig_fusion_hyper_msl}
\end{figure*}

Figs.~\ref{fig_fusion_hyper_msl}(a.1, a.2, b.1, b.2) show that increasing fusion depth or width brings no clear gain, and several settings slightly reduce VUS-ROC or VUS-PR, consistent with the observation that stronger fusion is not necessarily beneficial. Figs.~\ref{fig_fusion_hyper_msl}(a.3, a.4, b.3, b.4) further show that performance depends on the joint setting of cycle consistency and cross-path consistency. Higher scores appear when cross-path consistency remains low and cycle consistency is moderate, while large cross-path consistency does not yield reliable improvements. Figs.~\ref{fig_fusion_hyper_msl}(a.5, a.6, b.5, b.6) show that VUS-ROC and VUS-PR vary with the joint choice of $\alpha_{\mathrm{cyc}}$ and $\alpha_{\mathrm{ms}}$. The trend is not monotonic along either axis, since increasing one weight does not reliably improve the score when the other is fixed. Higher values concentrate in a middle region of the grid, where $\alpha_{\mathrm{cyc}}$ is moderate to large and $\alpha_{\mathrm{ms}}$ stays below its maximum, while several boundary settings yield lower scores. This pattern appears in both metrics, which supports the scoring design that balances cycle-based disagreement and multi-scale residual evidence rather than letting one term dominate.

\subsection{Sensitivity of Downsampling Factors} 
\label{sec_multi_scale_appendix}
\begin{figure}[t]
\centering
\includegraphics[width=.31\columnwidth]{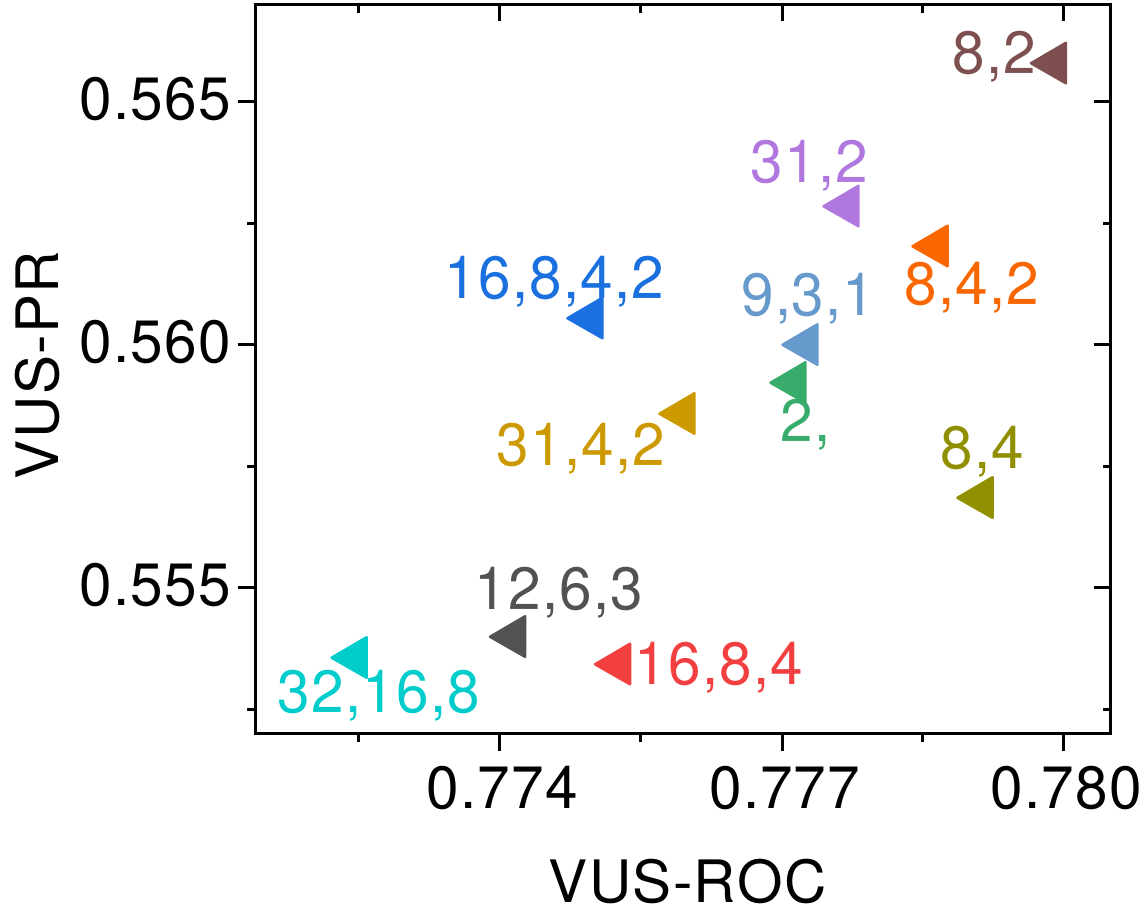}
\llap{(a)~}
\includegraphics[width=.31\columnwidth]{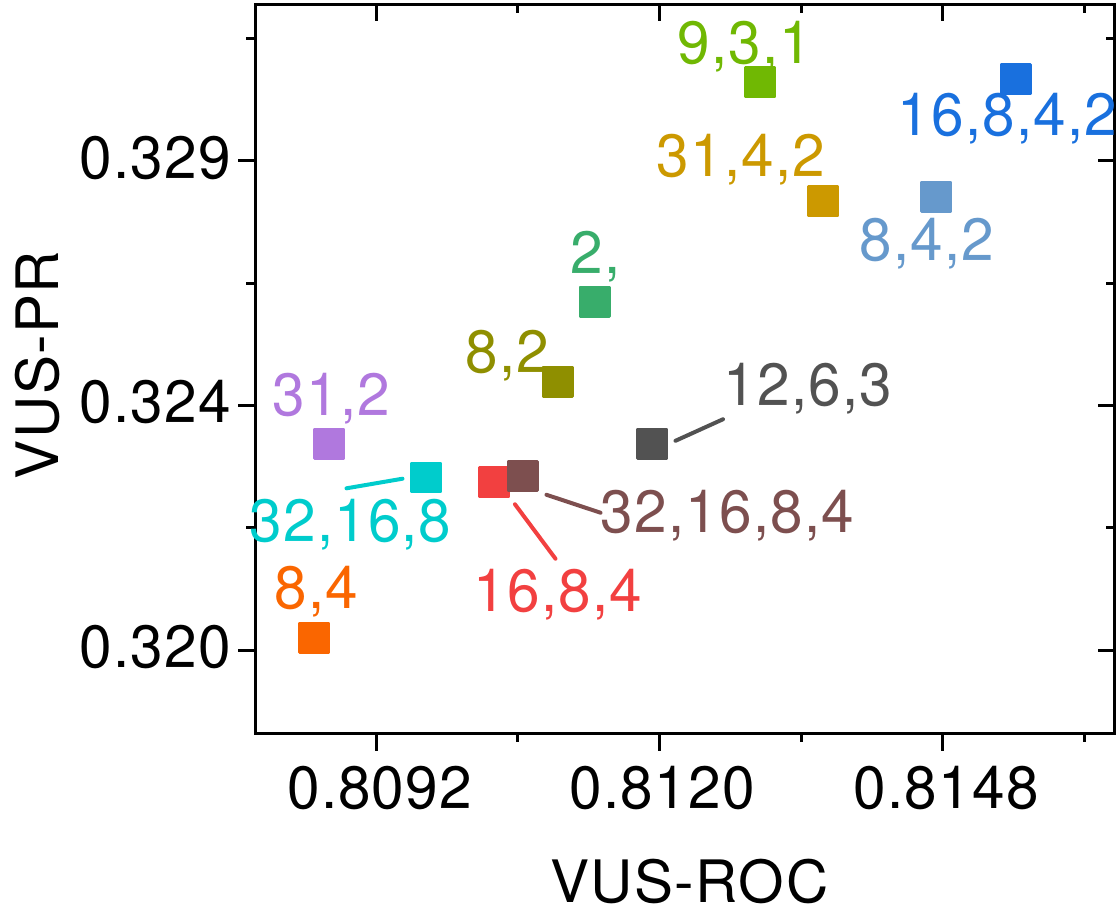}
\llap{(b)~}
\includegraphics[width=.31\columnwidth]{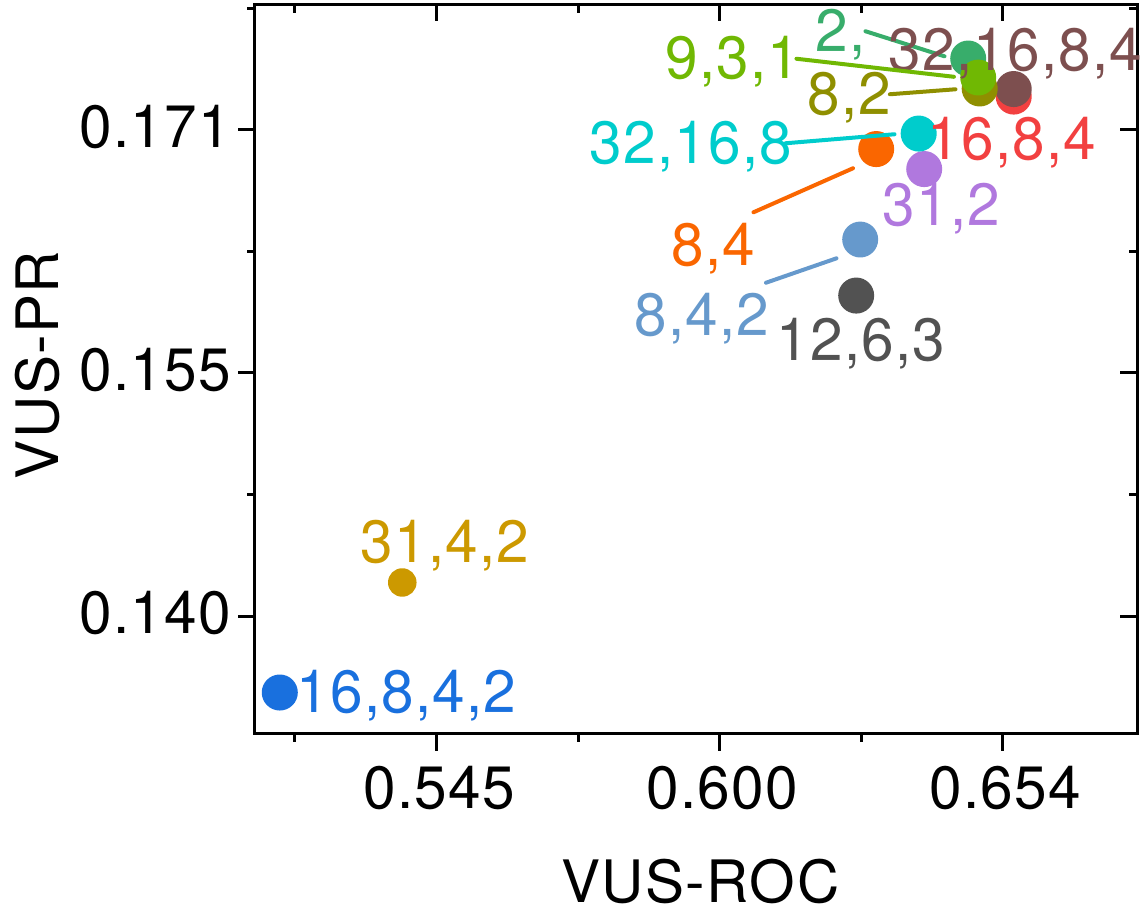}
\llap{(c)~}
\caption{Sensitivity of $\{r_k\}_{k=1}^K$. (a): PSM. (b): MSL. (c): SMAP. }
\label{fig_multi_scale}
\end{figure}
Fig.~\ref{fig_multi_scale} shows that downsampling factors $\{r_k\}_{k=1}^K$ affect performance. The VUS-ROC and VUS-PR results in Table~\ref{tab_main_results} use fixed factors, such as (16, 8, 4) for SMAP, which may not be optimal. A more thorough hyper-parameter search may further improve performance.

\subsection{Effect of JS vs. Wasserstein}

In LEFT, JS divergence serves as a prototype calibration criterion for reliable EMA updates. To examine this choice, we replace the JS discrepancy with a Wasserstein-based variant (WD), keeping the other components unchanged. As shown in Table~\ref{tab_js_wd}, WD does not improve performance on PSM, MSL, or SMAP.

\subsection{Limitations}
This work develops a unified tri-view framework for unsupervised TSAD and shows competitive performance with favorable efficiency on the evaluated benchmarks. Nevertheless, this study is validated on a limited set of datasets and has not been tested on broader benchmarks such as UCR \cite{wu_current_2021} or in practical deployment settings. In addition, the method may still produce false alarms in some cases, which remains a direction for future work.

\subsection{Details of the Proof}
\label{sec_proof_appendix}
\subsubsection*{Proof of Lemma \ref{lem_nyquist_alias}}
\label{sec_proof_lemma1}
\begin{proof}
Since $r_1\ge\cdots\ge r_K$, the corresponding Nyquist cutoffs satisfy $c_1\le\cdots\le c_K$. Since $\sigma(u_k)\in(0,1)$, we have
$e_k-e_{k-1}=(c_k-e_{k-1})\sigma(u_k)\ge 0$,
which shows that the edge sequence $\{e_k\}$ is monotone nondecreasing. Moreover,
$e_k=e_{k-1}+(c_k-e_{k-1})\sigma(u_k)\le e_{k-1}+(c_k-e_{k-1})=c_k$,
so $e_k\le c_k$ for all $k$.
By the definition of $\tilde{\bm{S}}^{(k)}(f)$, downsampling periodically replicates the spectrum and may introduce overlap among replicas. Therefore, the aliasing term is dominated by the residual spectral energy above the cutoff frequency $c_k$, up to a scaling factor $\vartheta(r_k)$. Accordingly,
\begin{equation}
\|\Delta_{r_k}(\bm{X}^{(k)})\|_{2}
\le
\vartheta(r_k)\sum_{f>c_k}\|\tilde{\bm{S}}^{(k)}(f)\|_{2}.
\label{eq-lemma1-proof-1}
\end{equation}

Using $\tilde{\bm{S}}^{(k)}(f)=\tilde m_k(f)\tilde{\bm{S}}(f)$ and $0\le \tilde m_k(f)\le 1$, we obtain,
\begin{equation}
\begin{split}
\sum_{f>c_k}\|\tilde{\bm{S}}^{(k)}(f)\|_{2}
&=
\sum_{f>c_k}\tilde m_k(f)\|\tilde{\bm{S}}(f)\|_{2}\\
&\le
\Bigl(\max_{f>c_k}\|\tilde{\bm{S}}(f)\|_{2}\Bigr)\sum_{f>c_k}\tilde m_k(f)\le
\|\tilde{\bm{X}}\|_{2}\,\epsilon_k,
\end{split}
\end{equation}
where the last inequality uses $\max_f \|\tilde{\bm{S}}(f)\|_{2}\le \|\tilde{\bm{X}}\|_{2}$. Substituting this bound into the previous inequality~\eqref{eq-lemma1-proof-1} yields~\eqref{eq_alias_bound}.
\end{proof}

\subsubsection*{Proof of Lemma \ref{lem_tf_cycle}}
\label{sec_proof_lemma2}

\begin{proof}
By definition,
\begin{equation}
\hat{\bm{S}}_{\rightarrow f}-\bm{S}
= W_\theta(\hat{\bm{X}})-W_\theta(\bm{X})
= W_\theta(\hat{\bm{X}}-\bm{X}).   
\end{equation}

Applying~\eqref{eq_frame_bounds} to $\bm{Y}=\hat{\bm{X}}-\bm{X}$ gives,
\begin{equation}
\sqrt{A}\,\|\hat{\bm{X}}-\bm{X}\|_2 \le \|\hat{\bm{S}}_{\rightarrow f}-\bm{S}\|_2
\le \sqrt{B}\,\|\hat{\bm{X}}-\bm{X}\|_2,
\end{equation}
which yields~\eqref{eq_time_from_freq} and~\eqref{eq_freq_from_time}.
\end{proof}

\subsubsection*{Proof of Lemma \ref{lem_score_closed_loop}}
\label{sec_proof_lemma3}

\begin{proof}
We repeatedly use the elementary inequality for SmoothL1:
for any scalar residual $r$,
\begin{equation}
|r|\le 2\,l(r,0)+1,
\label{eq_smoothl1_inline}
\end{equation}
which implies the same inequality after averaging over entries.
$\mathcal{A}_{\mathrm{ms}}(t)$ is a weighted sum of absolute reconstruction residuals aggregated across supervised scales.
Applying~\eqref{eq_smoothl1_inline} element-wise to each supervised residual and absorbing fixed dimensional factors yields,
\begin{equation}
\langle \mathcal{A}_{\mathrm{ms}}(t)\rangle_t
\ \le\
\kappa_{\mathrm{ms}}\mathcal{L}_{\mathrm{ms}}+\kappa_0^{(1)}.
\end{equation}

By definition, $\mathcal{A}_{\mathrm{cyc}}(t)=\bigl(\mathrm{MA}_\kappa(\tilde{\mathcal{A}}_{\mathrm{cyc}})\bigr)(t)$
with $\tilde{\mathcal{A}}_{\mathrm{cyc}}(t)\ge 0$.
The term $|\hat{\bm{X}}_{\leftarrow f}-\bm{X}|$ is directly controlled by the time domain SmoothL1 term
$l(\hat{\bm{X}}_{\leftarrow f},\bm{X})$ in $\mathcal{L}_{\mathrm{cyc}}$ via~\eqref{eq_smoothl1_inline}.
For $|\hat{\bm{X}}-\bm{X}|$, Lemma~\ref{lem_tf_cycle} gives
$\|\hat{\bm{X}}-\bm{X}\|_2 \le \tfrac{1}{\sqrt{A}}\|\hat{\bm{S}}_{\rightarrow f}-\bm{S}\|_2$.
Standard norm relations convert this to a bound on the mean absolute residual up to a fixed constant, and
$\|\hat{\bm{S}}_{\rightarrow f}-\bm{S}\|_2$ is controlled by the spectral SmoothL1 term
$l(\hat{\bm{S}}_{\rightarrow f},\bm{S})$ in $\mathcal{L}_{\mathrm{cyc}}$ via~\eqref{eq_smoothl1_inline}.
Hence,
\begin{equation}
\alpha_f\langle |\hat{\bm{X}}_{\leftarrow f}-\bm{X}|\rangle_t
+
\alpha_t\langle |\hat{\bm{X}}-\bm{X}|\rangle_t
\ \le\
\kappa_{\mathrm{cyc}}\mathcal{L}_{\mathrm{cyc}}+\kappa_0^{(2)}.
\end{equation}

For the cross-path term, using the definition $c(t)=|\hat{\bm{X}}_{\mathrm{ms}}(t)-\hat{\bm{X}}_{\leftarrow f}(t)|$
and applying~\eqref{eq_smoothl1_inline} to $l(\hat{\bm{X}}_{\mathrm{ms}},\hat{\bm{X}}_{\leftarrow f})=\mathcal{L}_{\mathrm{cons}}$ gives,
\begin{equation}
\alpha_c\langle c(t)\rangle_t
\ \le\
\kappa_{\mathrm{cons}}\mathcal{L}_{\mathrm{cons}}+\kappa_0^{(3)}.
\end{equation}

Combining the above with the nonnegative gate term yields,
\begin{equation}
\bigl\langle \tilde{\mathcal{A}}_{\mathrm{cyc}}(t)\bigr\rangle_t
\ \le\
\kappa_{\mathrm{cyc}}\mathcal{L}_{\mathrm{cyc}}
+
\kappa_{\mathrm{cons}}\mathcal{L}_{\mathrm{cons}}
+
\kappa_g\,\alpha_g\,\langle g(t)\rangle_t
+
\kappa_0^{(4)}.
\end{equation}
Applying~\eqref{eq_ma_rho} with $u(t)=\tilde{\mathcal{A}}_{\mathrm{cyc}}(t)$ yields,
\begin{equation}
\langle \mathcal{A}_{\mathrm{cyc}}(t)\rangle_t
\ \le\
\rho_\kappa\Bigl(
\kappa_{\mathrm{cyc}}\mathcal{L}_{\mathrm{cyc}}
+
\kappa_{\mathrm{cons}}\mathcal{L}_{\mathrm{cons}}
+
\kappa_g\,\alpha_g\,\langle g(t)\rangle_t
+
\kappa_0^{(4)}
\Bigr).
\end{equation}
Using $\mathcal{A}(t)=\alpha_{\mathrm{cyc}}\mathcal{A}_{\mathrm{cyc}}(t)+\alpha_{\mathrm{ms}}\mathcal{A}_{\mathrm{ms}}(t)$,
absorbing fixed coefficients into $\kappa_{\mathrm{ms}},\kappa_{\mathrm{cyc}},\kappa_{\mathrm{cons}}$,
and substituting $\mathcal{L}_{\mathrm{ms}}\le\varepsilon_{\mathrm{ms}}$,
$\mathcal{L}_{\mathrm{cyc}}\le\varepsilon_{\mathrm{cyc}}$,
$\mathcal{L}_{\mathrm{cons}}\le\varepsilon_{\mathrm{cons}}$
gives~\eqref{eq_normal_upper}.
Under the stated conditions on $\Omega$ and nonnegativity of all terms,
\begin{equation}
\begin{aligned}
&\bigl\langle \tilde{\mathcal{A}}_{\mathrm{cyc}}(t)\bigr\rangle_{t\in\Omega}
\\
&\ge
\alpha_t\bigl\langle |\hat{\bm{X}}(t)-\bm{X}(t)|\bigr\rangle_{t\in\Omega}
+
\alpha_f\bigl\langle |\hat{\bm{X}}_{\leftarrow f}(t)-\bm{X}(t)|\bigr\rangle_{t\in\Omega}
+
\alpha_c\langle c(t)\rangle_{t\in\Omega}
\\
&\ge
\alpha_t\delta_t+\alpha_f\delta_f+\alpha_c\delta_c,
\end{aligned}    
\end{equation}
which yields~\eqref{eq_anom_lower_cycle}.
If the pointwise bounds hold and the averaging window lies in $\Omega$, then
$\mathcal{A}_{\mathrm{cyc}}(t)=\mathrm{MA}_\kappa(\tilde{\mathcal{A}}_{\mathrm{cyc}})(t)$
preserves the same lower bound.
\end{proof}

\end{document}